\newif\ifhyper\IfFileExists{hyperref.sty}{\hypertrue}{\hyperfalse}
\ifhyper\usepackage{hyperref}\fi
\newtheorem{theorem}{Theorem}[section]
\newtheorem{cond}[theorem]{Condition}
\newtheorem{lemma}[theorem]{Lemma}
\newtheorem{informal theorem}[theorem]{Theorem (informal statement)}
\newtheorem{proposition}[theorem]{Proposition}
\newtheorem{claim}[theorem]{Claim}
\newtheorem{fact}[theorem]{Fact}
\theoremstyle{definition}
\newtheorem{definition}[theorem]{Definition}
\newcommand{\eqdef}{\stackrel{{\mathrm {\footnotesize def}}}{=}}
\newcommand{\bx}{\mathbf{x}}
\newcommand{\by}{\mathbf{y}}
\newcommand{\bv}{\mathbf{v}}
\newcommand{\bw}{\mathbf{w}}
\newcommand{\bX}{\mathbf{X}}
\newcommand{\bY}{\mathbf{Y}}
\newcommand{\adv}{\mathrm{Mass}}
\newcommand{\daniel}{\mathrm{Mass-L2}}
\newcommand{\loss}{\mathcal{L}_{\text{2}}}
\newcommand{\I}{\mathbb{I}}
\newcommand{\p}{\mathbf{P}}
\newcommand{\q}{\mathbf{Q}}
\newcommand{\R}{\mathbb{R}}
\newcommand{\Z}{\mathbb{Z}}
\newcommand{\E}{\mathbf{E}}
\newcommand{\dtv}{d_{\mathrm TV}}
\newcommand{\pr}{\mathbf{Pr}}
\renewcommand{\Pr}{\mathbf{Pr}}
\newcommand{\poly}{\mathrm{poly}}
\newcommand{\var}{\mathbf{Var}}
\newcommand{\relu}{\mathrm{ReLU}}
\newcommand{\sgn}{\mathrm{sign}}
\newcommand{\sign}{\mathrm{sign}}
\newcommand{\opt}{\mathrm{OPT}}
\newcommand{\D}{\mathcal{D}}
\newcommand{\bin}{\mathrm{Bin}}
\newcommand{\bi}{\mathbf{i}}
\newcommand{\wt}{\widetilde}
\newcommand{\wh}{\widehat}
\author{
Ilias Diakonikolas\thanks{Supported by NSF Medium Award CCF-2107079,
NSF Award CCF-1652862 (CAREER), a Sloan Research Fellowship, and
a DARPA Learning with Less Labels (LwLL) grant.}\\
University of Wisconsin-Madison\\
{\tt ilias@cs.wisc.edu}\\
\and
Daniel M. Kane\thanks{Supported by NSF Medium Award CCF-2107547,
NSF Award CCF-1553288 (CAREER), a Sloan Research Fellowship, and a grant from CasperLabs.}\\
University of California, San Diego\\
{\tt dakane@cs.ucsd.edu}\\
\and
Lisheng Ren \thanks{Supported by NSF Award CCF-1652862 (CAREER).}\\
University of Wisconsin-Madison\\
{\tt lren29@wisc.edu}\\
\and
Yuxin Sun \thanks{Supported by NSF Award CCF-1652862 (CAREER).}\\
University of Wisconsin-Madison\\
{\tt yxsun@cs.wisc.edu}\\
}
\title{SQ Lower Bounds for Learning Single Neurons \\ with Massart Noise}
\begin{document}

\maketitle

\begin{abstract}
We study the problem of PAC learning a single neuron in the presence of Massart noise. 
Specifically, for a known activation function $f: \R \to \R$, the learner is given
access to labeled examples $(\bx, y) \in \R^d \times \R$, where the marginal distribution of $\bx$ is 
arbitrary and the corresponding label $y$ is a Massart corruption of $f(\langle \bw, \bx \rangle)$.
The goal of the learner is to output a hypothesis $h: \R^d \to \R$ with small squared loss.
For a range of activation functions, including ReLUs, 
we establish super-polynomial Statistical Query (SQ) lower bounds for this learning problem. 
In more detail, we prove that no efficient SQ algorithm can approximate 
the optimal error within any constant factor. 
Our main technical contribution is a novel SQ-hard construction 
for learning $\{ \pm 1\}$-weight Massart halfspaces on the Boolean hypercube 
that is interesting on its own right.
\end{abstract}

\setcounter{page}{0}

\thispagestyle{empty}

\newpage

\section{Introduction} \label{sec:intro}

The success of deep learning has served as a motivation 
for understanding the complexity of learning simple classes of neural networks.
Here we study arguably the simplest possible setting of learning a {\em single} neuron,
i.e., a real-valued function of the form $\mathbf{x} \mapsto f(\langle \mathbf{w} , \mathbf{x} \rangle)$, 
where $\mathbf{w}$ is the weight vector of parameters and $f: \R \mapsto \R$ is a 
non-linear and monotone activation. The underlying learning problem is the following:
Given i.i.d.\ samples from a distribution
$\D$ on $(\mathbf{x}, y)$, where $\mathbf{x} \in \R^d$ is the example and $y \in \R$ 
is the corresponding label, the goal is to learn the target function in $L_2^2$-loss.
That is, the objective of the learner is to output a hypothesis
$h: \R^d \mapsto \R$ such that $\E_{(\mathbf{x}, y) \sim \D} [(h(\mathbf{x}) - y)^2]$ 
is as small as possible, compared to the optimal loss value 
$\opt: = \min_{\mathbf{w} \in \R^d} \E_{(\mathbf{x}, y) \sim \D} [(f( \langle \mathbf{w} , \mathbf{x} \rangle) - y)^2]$. 
A learning algorithm in this context is called proper if the hypothesis $h$
is restricted to be of the form 
$h_{\widehat{\mathbf{w}}}(\mathbf{x}) = f(\langle \widehat{\mathbf{w}},\mathbf{x}\rangle)$, for some 
$\widehat{\mathbf{w}} \in \R^d$.
One of the most popular activations is the ReLU function, corresponding to
$f(u) = \mathrm{ReLU}(u) \eqdef \max\{0, u\}$. 
In this work, we study the complexity of improperly learning single neurons, 
where the marginal distribution on examples
is fixed but arbitrary and the hypothesis $h$ is allowed to be any efficiently computable function.

In the realizable case, i.e., when the labels are consistent with a function 
in the target concept class,
the above learning problem is known to be efficiently solvable
for various activation functions. A line of work, see, e.g.,~\cite{KalaiS09, Mahdi17, yehudai2020learning}
and references therein, has shown that simple algorithms like gradient-descent 
efficiently converge to an optimal solution (in some cases under assumptions 
on the marginal distribution on examples). 
On the other hand, in the adversarial label noise (aka agnostic) model, 
known hardness results~\cite{Daniely16, DKMR22} rule out efficient constant factor approximations 
to the optimal loss for a range of activations including ReLUs.
The aforementioned negative results for label agnostic learning
motivate the study of weaker corruption models, where non-trivial efficient learning 
algorithms may still be possible. A natural class of such models --- that may be more realistic 
in some practical applications --- are semi-random noise models, 
involving a combination of adversarial choices and random choices.

Here we focus on the {\em Massart (or bounded) noise} model~\cite{Massart2006}, 
a classical semi-random model first defined in the context of binary classification 
(see~\cite{Sloan88} for an equivalent noise model).
Intuitively, in the Massart model, an adversary has control over a (uniformly) {\em random} $\eta<1/2$ fraction of the labels 
(see~\cref{def:Massart-L0}). 
In the context of binary classification,~\cite{DGT19} gave the first non-trivial 
learning algorithm for halfspaces in this model (see also~\cite{CKMY20, DKT21}). 
Subsequent work~\cite{DK21-SQ-Massart, NT22} provided evidence 
that the error guarantee of the latter algorithm is essentially best possible 
in the Statistical Query (SQ) model~\cite{Kearns:98}; and, 
more recently, under standard cryptographic assumptions~\cite{DKMR22-massart}.

To state our results, we formally define the following natural generalization of the Massart model
for real-valued functions (see, e.g.,~\cite{ChenKMY21, DPT21}).

\begin{definition}[Massart Noise Model] \label{def:Massart-L0}
Let $\mathcal{G}$ be a concept class of real-valued functions over $\R^d$,  
$\mathcal{D}_\bx$ be a fixed distribution over $\R^d$, and $0<\eta<1/2$. 
Fix an unknown function $g \in \mathcal{G}$.
The noiseless distribution $\D$ (corresponding to $g$) 
is the distribution on labeled examples $(\bX,Y)$, 
supported on $\R^d \times \R$, where $\bX \sim\D_\bx$ and 
$Y=g(\bX)$. An \emph{$\eta$-Massart distribution},  $\D^{\adv}_\eta$, is 
a distribution on labeled examples $(\bX,Y')$, supported on $\R^d \times \R$,
such that for $(\bX,Y') \sim \D^{\adv}_\eta$ we have that
(i) $\bX\sim\D_\bx$, and (ii) for all $\bx\in\R^d$ it holds that
$\Pr_{(\bX,Y')\sim \D^{\adv}_\eta}[Y'\ne g(\bX) \mid \bX=\bx]\le\eta$.
\end{definition}

Given sample access to the $\eta$-Massart distribution $\D^{\adv}_\eta$, 
corresponding to an unknown $g \in \mathcal{G}$,
the goal  is to output a hypothesis $h:\R^d \mapsto \R$ such that 
$\loss(h; \D^{\adv}_\eta) : = \E_{(\bX,Y') \sim \D^{\adv}_\eta}[(Y'-h(\bX))^2]$ is small. 
Let $\opt_\adv:=\inf_{g\in\mathcal{G}}\E_{(\bX,Y')\sim \D^{\adv}_\eta}[(Y'-g(\bX))^2]$ denote 
the optimal squared error. We will say that a learning algorithm is {\em $\alpha$-approximate} 
if it outputs a hypothesis $h: \R^d \mapsto \R$ that with high probability 
satisfies $\loss(h; \D^{\adv}_\eta) \leq \alpha(d)\cdot \opt_\adv$. We say that a learner
is a {\em constant factor approximation} if $\alpha = O(1)$.
Here we focus on the concept class of {\em single neurons}: 
for an activation $f: \R \mapsto \R$, we will denote by 
$\mathcal{C}_f \eqdef \{ c_{\bw}: \R^d \mapsto \R \mid c_{\bw}(\mathbf{x})=f(\langle \mathbf{w},\mathbf{x}\rangle), \mathbf{w}\in \R^d  \}$.

In~\cref{def:Massart-L0}, the Massart adversary corrupts each label independently 
with probability {\em at most} $\eta$, for some $\eta<1/2$.
Even though this noise model might appear innocuous, 
the fact that the corruption probability is unknown to the learner 
makes the design of efficient Massart learners challenging. 
The Massart model has been extensively studied
in the context of binary classification~\cite{Sloan88, RivestSloan:94, 
Sloan96, AwasthiBHU15, AwasthiBHZ16, DGT19, DKTZ20, CKMY20, DKT21} 
and, more recently,
for learning real-valued functions~\cite{ChenKMY21, DPT21}.

For the task of PAC learning halfspaces with Massart noise (i.e., neurons
corresponding to the sign activation), 
there is compelling evidence that even approximate learning 
is computationally hard~\cite{DK21-SQ-Massart, NT22, DKMR22-massart}. 
In sharp contrast, our
understanding of the possibilities and limitations 
of Massart learning well-behaved {\em real-valued} functions
(including ReLUs and other Lipschitz monotone activations) remains limited. 
On the positive side, recent work developed the first efficient learners for linear regression~\cite{ChenKMY21, DPT21} 
and ReLU regression~\cite{DPT21} with Massart noise. We note that 
the ReLU regression algorithm of~\cite{DPT21} requires a certain anti-concentration condition 
on the distribution $\D_{\bx}$ of examples, which is crucial for its performance guarantees. 
In fact, without such an assumption, no non-trivial upper bound 
is known for ReLUs (or other non-linear activations).
This discussion
prompts the following question:
\begin{center}
{\em Is there an efficient $O(1)$-approximate learner for {\em distribution-free} learning\\ 
of a single neuron with Massart noise?}
\end{center}
For the important case of ReLU activations, \cite{DPT21} conjectured that
the distribution-independent PAC learning problem is intractable. As the main contribution of this paper,
we provide strong evidence towards this conjecture, by establishing super-polynomial lower bounds 
in the Statistical Query (SQ) model --- a restricted but powerful family of algorithms. Specifically, 
we show that no efficient SQ algorithm can achieve {\em any} constant factor approximation.
Moreover, our SQ-hardness result is not specific to ReLUs, but generalizes
to a broad class of non-linear activation functions.

\subsection{Our Results} \label{ssec:results}

In this work, we give strong evidence that the problem of learning single neurons with Massart
noise does not admit any constant factor approximation. Specifically, we show that any 
efficient SQ algorithm cannot achieve a constant factor approximation. In fact, the hardness
gap that we establish is super-constant, scaling with the dimensionality of the problem.

Instead of directly accessing samples, SQ algorithms~\cite{Kearns:98} are only to adaptively 
query expectations of bounded functions of the underlying distribution 
up to some tolerance (see~\cref{sec:prelims}). 
The class of SQ algorithms is fairly broad: a wide range of known algorithmic techniques in
machine learning are known to be implementable in the SQ model~\cite{FeldmanGRVX17}.

For the important class of ReLU activations,  
our main result is the following:

\begin{theorem}[SQ Hardness of Massart Learning ReLUs]\label{thm:sq-relu-informal}
Any SQ algorithm that learns a single neuron with ReLU activation on $\R^d$,  in the presence of Massart noise 
with $\eta=1/3$, to squared error better than $1/\poly(\log(d))$ requires either queries of 
accuracy better than $2^{-(\log d)^{c_1}}$ or at least $2^{(\log d)^{c_1}}$ statistical queries, for some constant $c_1>1$. 
This holds even if the optimal squared error is at most $2^{-(\log d)^{c_2}}$ for some $0<c_2<1$, 
and the total weight of the neuron is $\poly(d)$.
\end{theorem}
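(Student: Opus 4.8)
The plan is to deduce the theorem from a new SQ lower bound for learning $\{\pm1\}$-weight halfspaces on the Boolean hypercube with Massart noise --- the construction highlighted in the abstract --- and then to transfer it to ReLU regression by an elementary, SQ-simulable reduction.

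\textbf{The core construction.} I would build a family of $\eta$-Massart distributions $\{\D_v\}$ on $\{\pm1\}^d\times\{\pm1\}$ indexed by a hidden structure $v$, consisting of a size-$k$ subset $S\subseteq[d]$ together with a $\pm1$ sign pattern, where $k=\mathrm{polylog}(d)$; the target concept of $\D_v$ is the $\{\pm1\}$-weight halfspace $\sgn(\langle v,\bx\rangle)$ with $v$ zero-padded outside $S$. The marginal on $\bx$ and the Massart corruption would be chosen so that: (i) relative to a single reference distribution $\D_0$, the distributions $\D_v,\D_{v'}$ have pairwise correlation $2^{-\Omega((\log d)^{c_1})}$ for distinct $v,v'$, obtained by matching all low-degree moments (up to degree $m=\mathrm{polylog}(d)$) of the joint law of $(\bx,Y')$ across all $v$; and (ii) the optimal squared error stays at most $2^{-(\log d)^{c_2}}$. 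The subtle point is that the Massart adversary's freedom to shape the conditional law of $Y'$ given $\bx$ is used \emph{simultaneously} to erase the low-degree signal of the planted direction and to keep the target concept nearly consistent with $\D_v$, while on the part of the space that actually carries the planted structure the label is still correlated enough with $\sgn(\langle v,\bx\rangle)$ that any hypothesis with squared error below the stated threshold is forced to identify $v$. Since there are $\binom dk 2^k = 2^{\Theta((\log d)^{c_1})}$ values of $v$, plugging the correlation bound into the generic SQ-dimension framework (see \Cref{sec:prelims}) shows that learning $\{\D_v\}$ --- hence distinguishing its members --- requires $2^{\Omega((\log d)^{c_1})}$ queries or tolerance $2^{-\Omega((\log d)^{c_1})}$.

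\textbf{Reduction to Massart ReLUs.} From the halfspace family I would produce ReLU instances on a fixed transformation of the same example space, engineered so that the target ReLU neuron is flat (equal to $0$) on a dominant, $v$-independent chunk of the domain and, on the planted chunk, is an affine function of the halfspace label $\sgn(\langle v,\bx\rangle)$; relabeling the $\{\pm1\}$ responses accordingly gives a valid $\eta=1/3$-Massart ReLU instance. Because the neuron vanishes off the planted chunk and the planted chunk has small mass (carrying the hidden structure) with only a controlled amount of corruption, its squared error can be certified to be at most $2^{-(\log d)^{c_2}}$, witnessed by a neuron of $\poly(d)$ total weight. Conversely, clipping any hypothesis $h$ to the range of the ReLU target --- which never increases its squared loss, since all responses lie there --- and thresholding yields a classifier whose $0$--$1$ error against $\sgn(\langle v,\bx\rangle)$ is $O(1/\poly(\log d))$, because every classification mistake contributes a constant amount to the squared loss; for the right parameters this contradicts the core lower bound. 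Since the reduction only reads the example (possibly after appending a fixed feature) and relabels the response deterministically, each SQ query about the ReLU distribution is answered by an SQ query about the halfspace distribution at essentially the same tolerance, so the lower bound transfers.

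\textbf{Main obstacle.} The heart of the argument --- and the step I expect to be hardest --- is the core construction: designing a planted distribution that is at once hard enough that every good hypothesis pins down the hidden direction, statistically invisible to all degree-$\le m$ queries, and compatible with a super-polynomially small optimal error. The last two requirements are in tension: annihilating the low-degree correlations of a threshold function against a product measure is inherently expensive in corruption, so the planted marginal cannot be a product measure and must be tailored so that the relevant threshold is already nearly orthogonal to low-degree characters; making this coexist with the structure needed for the ReLU embedding, and balancing $k$, $m$, the corruption profile, and the scale of the ReLU target to land on exponents $c_1>1>c_2>0$, is where most of the technical effort goes. The SQ-dimension argument and the ReLU reduction are, by comparison, routine.
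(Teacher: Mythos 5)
Your high-level architecture matches the paper's (a planted polylog-size junta family, pairwise near-orthogonality via moment matching, the standard SQ-dimension argument, and a transfer to ReLUs by realizing the relevant indicator as a ReLU of an affine function on a transformed example space). But the proposal leaves the part you yourself flag as ``the heart of the argument'' unresolved, and the one place where you commit to specifics is, I believe, unworkable. You take the target concept of $\D_v$ to be a sparse halfspace $\sgn(\langle v,\bx\rangle)$ in the \emph{original} coordinates. For the SQ-dimension bound to give tolerance $2^{-(\log d)^{c_1}}$, the conditional laws of $\bx$ given $y=\pm1$ must match moments with the reference up to degree $\Theta(k)$ to accuracy $2^{-\mathrm{polylog}(d)}$; for a distribution determined by the value of $\sum_{i\in S}x_i$, the degree-$t$ Fourier coefficients are Kravchuk evaluations, which for a distribution concentrated near a single level of the slice are only $1/\poly(m)=1/\mathrm{polylog}(d)$, far too large. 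Forcing the conditionals to spread out like the binomial (to kill these moments) makes the two classes overlap heavily, destroying the small-$\opt$ requirement --- this is exactly the ``tension'' you name, and it is not resolvable while the separating concept remains a halfspace in the original variables.

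The paper's resolution, which is absent from your plan, is that the optimal classifier in the hard family is \emph{not} a halfspace in the original coordinates: the $y=-1$ class is (a perturbation of) the binomial restricted to residues $0\bmod s$, the $y=+1$ class is the binomial with mass surgically removed from the $d$ central such residues (a union $J$ of ``holes''), and the separator is the indicator of $\{\bv_S^T\bx\in J\}$ --- a degree-$O(d)$ PTF. Two further ingredients are then essential and missing from your write-up: (i) a local mass-redistribution lemma (proved by LP duality against Chebyshev-type polynomials) showing one can zero out $\D_+$ on $J$ without perturbing any moment of degree $\le k$, which is what reconciles moment matching with disjoint supports and hence tiny $\opt$; and (ii) a Veronese embedding of $\{0,1\}^{m'}$ into the degree-$\le O(d)$ monomials (followed by coordinate duplication) that converts this PTF into a $\{\pm1\}$-weight LTF, and converts $\relu(p(\bv_S^T\bx))$ --- with $p=1$ on $J$ and $p\le 0$ off $J$, obtained by interpolation --- into a genuine single ReLU neuron of $\poly(d)$ weight. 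Your ``clip and threshold'' step for going from squared loss back to $0$--$1$ loss is fine once the instances coincide, and your SQ-dimension bookkeeping is standard; but without the periodic-holes construction, the moment-preserving redistribution, and the Veronese step, the proposal does not yet constitute a proof.
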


\cref{thm:sq-relu-informal} rules out the existence of efficient SQ algorithms 
(i.e., using polynomially many queries of inverse polynomial accuracy) 
with approximation ratio $2^{(\log d)^c}$ for some $0<c<1$. It therefore a fortiori rules
out any constant factor approximate SQ learner.

We note that the SQ-hardness result of~\cref{thm:sq-relu-informal} does not require
the linearity of the ReLU (on positive inputs); a similar result can be shown 
for a broader class of activation functions. Specifically, we can generalize our SQ-hardness result to any
activation $f$ of the form $f(u) = 0$, $u<0$, and $\exists u_0\ge0,f(u_0)\ne0$.

\cref{thm:sq-relu-informal}  establishes SQ-hardness of learning single neurons
under the Massart noise notion of~\cref{def:Massart-L0}. 
We note that for learning real-valued functions, one can consider
other natural definitions of ``Massart noise''. Specifically,~\cref{def:Massart-L0} considers an 
$L_0$-perturbation (the adversary is allowed to arbitrarily corrupt a random $\eta$-fraction of the labels).
Another natural definition considers $L_2$-perturbations, as stated below (note that in the definition 
below, the parameter $\eta$ does not need to be bounded above by $1/2$).

\begin{definition}[$L_2$-Massart Noise Model] \label{def:Massart-L2}
Let $\mathcal{G}$ be a concept class of real-valued functions over $\R^d$,  
$\mathcal{D}_\bx$ be a fixed distribution over $\R^d$, and $\eta >0$. 
Fix an unknown function $g \in \mathcal{G}$.
An \emph{$\eta$-$L_2$-Massart distribution},  $\D^\daniel_\eta$, is 
a distribution on labeled examples $(\bX,Y)$, supported on $\R^d \times \R$,
such that for $(\bX,Y) \sim \D^\daniel_\eta$ we have that
(i) $\bX\sim\D_\bx$, and (ii) for all $\bx\in\R^d$ it holds that
$\E_{(\bX,Y) \sim \D^\daniel_\eta}[(Y-g(\bX))^2 \mid \bX=\bx]\le 4\eta.$
We will use $\opt_\daniel:=\inf_{g\in\mathcal{G}}\E_{(\bX,Y)\sim \D^\daniel_\eta}[(Y-g(\bX))^2]$ 
to denote the optimal squared error.
\end{definition}

Note that for $\{\pm 1\}$ labels, with noise rate $\eta <1/2$, 
the above model generalizes the standard Massart model (for binary classification) 
with the same noise rate $\eta$. For this noise model, we establish SQ-hardness
for the following general family of non-linear activations (including ReLUs):

\begin{definition}[Fast Convergent Activation] \label{def:fast-conv-act}
We say that a function $f:\R \mapsto \R$ is a fast-convergent activation 
if either $g(t):=f(t)$ or $g(t):=f(-t)$ satisfies the following: 
(i) $\lim_{t\rightarrow -\infty}g(t)$ exists. (ii) For $t<0$ with absolute value sufficiently large, 
$|g(t)-g(-\infty)|=1/\poly(|t|)$.
\end{definition}

Intuitively, the second condition above requires that the function converges to its limit at inverse polynomial rate.
Without loss of generality, we consider activations which converge on the negative side.
For such an activation $f$, let $f_-:=f(-\infty)$ and $c_+$ be a constant such that $f(c_+)\ne f_-$.

Our proof technique establishing~\cref{thm:sq-relu-informal} is quite robust  
and can be adapted to $L_2$-Massart noise under fast convergent activations.
Our main result in this context is the following:

\begin{theorem}[SQ Hardness of $L_2$-Massart Learning] \label{thm:sq-fc-L2}
Let $f: \R \mapsto \R$ be a fast convergent activation.
Any SQ algorithm that learns a single neuron with activation $f$ on $\R^d$,  
in the presence of $\eta$-$L_2$-Massart noise 
with $\eta=\frac{2(f(c_+)-f_-)^2}{9}$, to squared error 
better than $1/\poly(\log(d))$ requires either queries of 
accuracy better than $2^{-(\log d)^{c_1}}$ or 
at least $2^{(\log d)^{c_1}}$ statistical queries, for some constant $c_1>1$. 
This holds even if the optimal squared error is 
at most $2^{-(\log d)^{c_2}}$ for some $0<c_2<1$,
and the total weight of the neuron is $\poly(d)$.
\end{theorem}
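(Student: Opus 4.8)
The plan is to reduce \cref{thm:sq-fc-L2} to the $L_0$-Massart hardness result of \cref{thm:sq-relu-informal} (or rather, to its underlying SQ-hard construction, the $\{\pm 1\}$-weight Massart halfspace instance promised in the abstract). The key observation is that the $L_2$-Massart model of \cref{def:Massart-L2} is, for binary-style labels, a relaxation of the $L_0$-model: a distribution that flips a label with probability at most $\eta_0 < 1/2$ to a value differing by a bounded amount automatically satisfies the $L_2$ condition with an appropriate parameter. So the heavy lifting—constructing a family of distributions that is SQ-indistinguishable from a reference distribution while having distinct near-optimal single-neuron fits—has already been done; what remains is to (a) verify that the construction can be phrased with a fast-convergent activation $f$ in place of the ReLU, and (b) check that the noise bookkeeping goes through with the $L_2$ parameter $\eta = \frac{2(f(c_+)-f_-)^2}{9}$ and with $\eta_0 = 1/3$ playing the role of the flip probability.

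\smallskip

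First I would recall the structure of the SQ-hard construction from the proof of \cref{thm:sq-relu-informal}: there is a base marginal $\D_\bx$ on $\R^d$ (ultimately built by embedding a Boolean-hypercube construction via a suitable lift), a ``null'' label distribution, and a large family $\{\D_\bv\}$ of ``planted'' distributions indexed by hidden directions $\bv$, such that (i) any two $\D_\bv,\D_{\bv'}$ have small pairwise correlation relative to the null distribution—yielding the SQ lower bound via the standard statistical-dimension argument—and (ii) each $\D_\bv$ is an $\eta_0$-Massart corruption (with $\eta_0=1/3$) of a single neuron $c_{\bw_\bv}$ with $\|\bw_\bv\|_1 = \poly(d)$, with tiny optimal error $2^{-(\log d)^{c_2}}$. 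In that construction the target function takes values in a bounded range; on the negative half-line the ReLU is identically $0$, matching condition (i) of \cref{def:fast-conv-act}, and on the relevant positive input it takes a fixed nonzero value. To port this to an arbitrary fast-convergent $f$, I would replace the target neuron $c_{\bw_\bv}(\bx)=\relu(\la \bw_\bv,\bx\ra)$ by $f(\la \bw_\bv', \bx\ra)$ where $\bw_\bv'$ is obtained from $\bw_\bv$ by scaling (and, if needed, negating so that $f$ converges on the side we use): for the ``inactive'' examples we push $\la \bw_\bv',\bx\ra$ to $-\infty$ fast enough that $f$ is within $2^{-(\log d)^{c_2}}$ of $f_-$, using the inverse-polynomial convergence rate together with a $\poly(d)$ scaling of the weights (this is exactly where condition (ii) of \cref{def:fast-conv-act} and the $\poly(d)$ total-weight budget are used); for the ``active'' examples we arrange $\la \bw_\bv',\bx\ra = c_+$ so the neuron outputs $f(c_+)\neq f_-$. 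The label values in the construction are then shifted/rescaled affinely so that the ``clean'' label equals $f_-$ on inactive points and $f(c_+)$ on active points; since SQ complexity and the optimal error are invariant under a fixed affine relabeling, the indistinguishability and the correlation bounds transfer verbatim.

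\smallskip

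Next I would handle the noise accounting. In the original construction the adversary flips, on a $\le \eta_0 = 1/3$ fraction of points (conditioned on $\bx$), the clean label $f(c_+)$ to $f_-$ (or vice versa); the magnitude of each such flip is $|f(c_+)-f_-|$. Hence conditioned on any $\bx$, $\E[(Y-c_{\bw_\bv'}(\bx))^2 \mid \bX=\bx] \le \eta_0\,(f(c_+)-f_-)^2 = \frac{1}{3}(f(c_+)-f_-)^2 = \frac{3}{2}\cdot\frac{2(f(c_+)-f_-)^2}{9} \le 4\eta$ with $\eta = \frac{2(f(c_+)-f_-)^2}{9}$, so $\D_\bv$ is a legitimate $\eta$-$L_2$-Massart distribution for the activation $f$ (up to the negligible $2^{-(\log d)^{c_2}}$ slack coming from the non-exact convergence on the inactive points, which is easily absorbed by taking the constant in the polynomial scaling large enough). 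Then I would invoke the same statistical-query dimension lower bound as in the proof of \cref{thm:sq-relu-informal}: any SQ learner outputting $h$ with $\loss(h;\D_\bv) \le 1/\poly(\log d)$ would, by the correlation/indistinguishability structure, have to identify the hidden direction $\bv$ among super-polynomially many mutually near-orthogonal alternatives, forcing either $2^{(\log d)^{c_1}}$ queries or accuracy better than $2^{-(\log d)^{c_1}}$.

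\smallskip

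The main obstacle I anticipate is step (a): showing that the inactive examples can be driven deep into the convergent tail of $f$ without blowing the $\poly(d)$ total-weight constraint and without disturbing the delicate pairwise-correlation estimates that power the SQ bound. Concretely, the inverse-polynomial convergence in \cref{def:fast-conv-act}(ii) means that to get the tail value within $2^{-(\log d)^{c_2}}$ of $f_-$ we only need $|\la \bw_\bv',\bx\ra|$ to be $2^{\Theta((\log d)^{c_2}/\deg)}$, which is $2^{o(\log d\cdot\text{something})}$—we must verify this is achievable with $\|\bw_\bv'\|_1=\poly(d)$ given the (bounded) geometry of the embedded hypercube points, and that the resulting constant-factor changes in the label range only rescale $\opt$ and the target error by $O(1)$, leaving the super-polynomial gap intact. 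I expect this to work because the ReLU construction already embeds everything in a bounded region and only needs a $\poly(d)$ margin separation, and the fast-convergent tail requirement is exponentially weaker than what $\poly(d)$ weights provide; but it is the one place where \cref{def:fast-conv-act} is genuinely used and so deserves careful checking. The rest—affine relabeling invariance and the $L_0 \Rightarrow L_2$ noise comparison—is routine.
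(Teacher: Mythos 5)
Your proposal is correct and follows essentially the same route as the paper: the paper likewise reuses the hidden-junta hard instance verbatim, relabels with $a=f_-$, $b=f(c_+)$, and replaces the ReLU target by $f\bigl((c_++M)q(\cdot)-M\bigr)$ where $q$ is the interpolating polynomial equal to $1$ on $J$ and $\le 0$ off $J$ --- exactly your ``affine shift pushing inactive points into the convergent tail,'' with the fast-convergence condition absorbing the $1/\poly(M)$ slack and the $1/3$ flip probability yielding conditional squared error at most $\tfrac{2}{3}(f(c_+)-f_-)^2+1/\poly(M)\le 4\eta$. The one step you leave implicit --- that under the null distribution every hypothesis incurs squared error at least $\var(Y)=p(1-p)(f_--f(c_+))^2=\Theta(1/s)$, which is what converts a good learner into a distinguisher --- is carried over unchanged from the ReLU argument you cite, so the plan is sound.
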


\noindent Interestingly, the key ingredient for our aforementioned 
SQ hardness results for real-valued functions is a new SQ hardness 
construction for low-weight halfspaces 
(i.e., neurons with a $\sgn$ activation) on the Boolean hypercube. 
In this context, we prove:

\begin{theorem}[SQ Hardness for Low-weight Massart Halfspaces on $\{0, 1\}^d$] \label{thm:discrete-ltfs-intro}
Any SQ algorithm that learns $\{\pm 1\}$-weight halfspaces on $\{0,1\}^d$,  
in the presence of Massart noise with $\eta=1/3$, 
to 0-1 error better than $1/\poly(\log(d))$ requires either queries of 
accuracy better than $2^{-(\log d)^{c_1}}$ or 
at least $2^{(\log d)^{c_1}}$ statistical queries, for some constant $c_1>1$. 
This holds even if the optimal 0-1 error 
is at most $2^{-(\log d)^{c_2}}$ for some $0<c_2<1$.
\end{theorem}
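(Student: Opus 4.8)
The plan is to invoke the standard statistical-query lower bound methodology: reduce the learning problem to a many-way distinguishing problem and exhibit a large family of hard instances. Concretely, I would construct a family $\{\D_{\bv}\}_{\bv\in V}$ of $(1/3)$-Massart distributions on $\{0,1\}^d\times\{\pm1\}$, indexed by a hidden combinatorial structure $\bv$, together with a reference distribution $\D^{\mathrm{ref}}$, such that: (i) each $\D_{\bv}$ is a valid Massart distribution in the sense of \cref{def:Massart-L0} whose ground-truth (and near-optimal) classifier is a $\{\pm1\}$-weight halfspace, with optimal $0$-$1$ error at most $2^{-(\log d)^{c_2}}$; (ii) for $\bv\ne\bv'$ the pairwise correlation $\la\D_{\bv},\D_{\bv'}\ra_{\D^{\mathrm{ref}}}$ is at most $2^{-(\log d)^{\Omega(1)}}$; (iii) $|V|\ge 2^{(\log d)^{\Omega(1)}}$; and (iv) any hypothesis with $0$-$1$ error below $1/\poly(\log d)$ on $\D_{\bv}$ is incompatible (at the same error level) with $\D_{\bv'}$ for every $\bv'\ne\bv$, so that a learner must effectively recover $\bv$. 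Plugging (i)--(iv) into the generic SQ lower bound recalled in \cref{sec:prelims} yields the stated dichotomy of $2^{(\log d)^{c_1}}$ queries versus accuracy below $2^{-(\log d)^{c_1}}$.

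The family is obtained by planting a hidden low-dimensional gadget. I would split the coordinates into a hidden block of size $m=(\log d)^{\Theta(1)}$, whose identity (together with an associated sign/shift pattern) is the index $\bv$, and a ``filler'' block on the remaining $d-m$ coordinates. The $\bX$-marginal on the hidden block, together with a Massart-noisy labelling rule supported there, would be engineered so that the conditional label mean $\E[Y\mid\bX=\bx]$ reveals essentially no information about $\bv$ to any query of degree below $\ell=\Theta(m)$ in the hidden coordinates -- a moment-matching property of the hidden-block marginal -- while still being a pointwise $(1/3)$-Massart perturbation of the target halfspace whose error set has measure at most $2^{-(\log d)^{c_2}}$. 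The filler block's marginal would be chosen so that the target's $\pm1$ weights there contribute in a controlled, benign way; this also lets the target threshold be realized \emph{exactly} with $\{\pm1\}$ weights, e.g.\ after duplicating coordinates to obtain the integer threshold one needs. Distribution-freeness is essential: we exploit the freedom to pick both blocks' marginals adversarially.

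The correlation estimate (ii) then reduces to a Fourier/Krawtchouk computation on the hidden block: the moment-matching property cancels the low-degree contributions to $\la\D_{\bv},\D_{\bv'}\ra_{\D^{\mathrm{ref}}}$, and since two distinct indices differ on $\Omega(m)$ hidden coordinates, the surviving high-degree contributions are bounded by a Krawtchouk-tail factor of order $2^{-\Omega(\ell)}$; choosing $m$ to be an appropriate power of $\log d$ makes both this quantity and $1/|V|$ at most $2^{-(\log d)^{\Omega(1)}}$. Item (iv) is immediate once (i) holds, as the near-optimal halfspaces for distinct $\bv$ are (essentially) the respective Bayes classifiers and must disagree on a set of non-negligible mass. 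Tracking the constants through (i)--(iv) and the SQ lower bound then produces the exponents $c_1>1>c_2>0$.

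The crux -- and the reason a genuinely new construction is required, rather than a restriction of existing Massart SQ-hardness gadgets -- is engineering the hidden block in item (i) so that the moment-matching up to degree $\ell=\Theta(m)$ coexists with both the pointwise $\eta\le 1/3$ Massart constraint and an optimal error as small as $2^{-(\log d)^{c_2}}$. These requirements pull against one another: hiding $\bv$ from all low-degree queries pushes the label distribution toward something that carries little low-degree signal, whereas Massart-validity together with a tiny optimal error forces the labels to be almost perfectly explained by one fixed $\{\pm1\}$-weight halfspace. The resolution must route the low-degree ``signal'' through the filler block (which is $\bv$-independent) while confining the $\bv$-dependence to high degree on a carefully shaped sub-region, and calibrating $m$, $\ell$, the measure of the error set, and the precise noise profile against each other is the delicate part; it is also what pins the resulting hardness at the quasi-polynomial scale of the theorem rather than at an exponential one.
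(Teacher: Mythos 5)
Your outline reproduces the standard scaffolding that the paper also uses (a hidden junta of size $m=(\log d)^{\Theta(1)}$, moment matching against the uniform/binomial reference, a Krawtchouk-based pairwise-correlation bound as in \cref{lem:cor-disc}, and the generic SQ-dimension lemma), but it stops exactly where the paper's actual contribution begins, and it misses a structural step without which the hard instance is not a halfspace at all. First, you explicitly defer the construction of the hidden-block label distribution, calling it ``the crux'' and ``the delicate part'' without resolving it. The paper's resolution (\cref{prop:mainProp-disc}) is concrete: the $y=-1$ conditional is the binomial restricted to multiples of $s$ (which nearly matches low-degree moments with $\bin(m,1/2)$ by a roots-of-unity computation), and the $y=+1$ conditional is a constant multiple of the binomial from which mass is removed at the $d$ residue points $J$ near $m/2$ and redistributed to nearby points via a signed measure that annihilates all moments up to degree $k$; the existence of that signed measure with small coefficients is itself a nontrivial LP-duality/trigonometric-polynomial argument (\cref{lem:duality-disc}). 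This is what simultaneously achieves moment matching, the pointwise ratio $\D_+>2\D_-$ off $J$ (hence $\eta\le 1/3$), and optimal error $\zeta$. Your suggested resolution --- routing the low-degree signal through a $\bv$-independent filler block --- is not how the tension is resolved (the filler coordinates are simply uniform), and as stated it does not produce a construction.

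Second, and independently, the Bayes classifier of any such hidden-junta instance is $-1$ precisely when $\bv_S^T\bx$ lies in the union $J$ of $d$ residue points, i.e., it is a degree-$O(d)$ polynomial threshold function, not a halfspace, and no amount of coordinate duplication fixes that. The paper obtains a halfspace by composing with the Veronese embedding $V_{O(d)}$, so that the ambient cube $\{0,1\}^M$ consists of degree-$O(d)$ monomial coordinates of a smaller cube and the PTF becomes an LTF; only then does coordinate duplication reduce the (polynomially bounded) integer weights to $\{\pm 1\}$. Your proposal treats the weight normalization as the only obstacle and omits the embedding entirely. A smaller point: your item (iv) (pairwise incompatibility of near-optimal hypotheses for distinct $\bv$) is not what plugs into \cref{lem:sq-from-pairwise}; the paper instead shows that any hypothesis with error below $\min(p,1-p)-2\sqrt{2\tau}$ certifies that the input is not the null distribution $U^p_{m'}$, which is the reduction the decision-problem framework actually requires.
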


\cref{thm:discrete-ltfs-intro} rules out any efficient polynomial (relative) approximation
for  $\{\pm 1\}$-weight halfspaces on the hypercube. This
is the first hardness result for approximate learning of Boolean Massart halfspaces. 
Prior work either obtained SQ-hardness of exact learning~\cite{CKMY20} 
or was inherently applicable to halfspaces on $\R^d$~\cite{DK21-SQ-Massart, NT22}. 

A number of learning problems involving halfspaces  
are computationally easy when the weights are small integers (aka in the ``large margin'' case) 
and computationally hard for arbitrary weights. 
The conceptual message of~\cref{thm:discrete-ltfs-intro} 
is that the Massart halfspace learning problem is hard 
due to the combinatorial nature of the problem 
(and not due to the magnitude of the weights).
This addresses an open problem of~\cite{Blum03} 
regarding the complexity of Massart learning 
simple halfspaces.

\subsection{Technical Overview} \label{ssec:techniques}

We start by describing the proof of~\cref{thm:discrete-ltfs-intro}.
Our SQ lower bound for learning Boolean Massart halfspaces requires a number of novel ideas. 
Our starting point is the construction of~\cite{DK21-SQ-Massart} 
that proves a similar lower bound in the continuous setting. 
They begin by producing a one-dimensional construction 
of a Massart polynomial threshold function (PTF) 
whose distributions conditional on $y=1$ and $y=-1$ 
approximately match many moments. 
Using techniques from~\cite{DKS17-sq}, 
they show that by embedding this one-dimensional construction 
into higher dimensions, 
they can produce $d$-dimensional instances of Massart PTFs 
that are SQ-hard to learn. 
They then further embed these instances via the Veronese embedding 
to produce SQ-hard LTF instances 
(essentially using the fact that a PTF in $x$ 
is an LTF in the low-degree monomials of $x$).

Our proof adapts this general idea to the discrete setting. 
The first obstacle is developing an appropriate analogue of the one-dimensional construction. 
The construction from~\cite{DK21-SQ-Massart} uses the fact that a discrete Gaussian 
nearly matches moments with a standard Gaussian; 
thus, making the conditional distributions 
of $x$ mixtures of discrete Gaussians 
ensures that the moment-matching condition is satisfied. 
By carefully picking this mixture, 
they ensure that the conditional distributions have no overlap for $|x|$ small 
(thus ensuring a small value of $\opt$), 
but that the $y=1$ case is always more likely for $|x|$ sufficiently large. 
This construction does not work in our setting, 
as we need our one-dimensional instance to be discrete.

Our basic idea is to begin by noting that the binomial distribution 
conditioned on $x$ being $0 \mod s$ 
approximately matches many moments with the full binomial. 
As a first attempt, we let $y=-1$ if $x\equiv0 \mod s$ and $y=1$ otherwise. 
This matches many moments with the binomial, 
but alternates between $y=1$ and $y=-1$ many times, 
and thus cannot be considered to be a low-degree 
PTF with Massart noise.
To fix this issue, we need to modify our distributions so that:
(i) Conditioned on any $x$ far from $n/2$, $y$ is more likely to be $1$ than $-1$,
(ii) the two distributions conditioning on $y=1$ and $y=-1$ have little overlap, and
(iii) each conditional distribution approximately matches moments with the full binomial.
We can fix (i) at the cost of (ii) by replacing the conditional distribution on $y=1$ 
with the full binomial distribution.
As long as the prior probability of $y=1$ exceeds that of $y=-1$ by enough,
even for $x\equiv0 \mod s$, $y=1$ will be more likely than $y=-1$.
Unfortunately, the conditional distributions now have too much overlap. 
We can address this by moving the mass in the $y=1$ 
conditional off of the points 
with $x \equiv0 \mod s$ and $|x-n/2|$ small.
Importantly, we must find a way to do this 
without destroying property (iii). 
To that end, we show that there is a way to move mass 
from each of these points $x$ 
and redistribute it to nearby points in such a way so as to not affect 
any of the low-order moments (see~\cref{lem:duality-disc}).
By doing this to each $x \equiv 0 \mod s$ with $|x-n/2|$ small, 
we get our final construction.

We also need to modify the method by which 
we embed the one-dimensional construction 
into higher dimensions in order to obtain the family of SQ-hard PTF instances. 
This construction must differ from previous constructions, 
as our family of distributions will be discrete 
and not Gaussian-like as in~\cite{DKS17-sq}. 
Fortunately, we can leverage the 
recent technique of~\cite{DKS22}, embedding our low-dimensional 
construction as a junta.
In particular, a significant difference with the Gaussian case 
is in the way we embed the low-dimensional distribution as a higher-dimensional one.
In the Gaussian case, we simply take the distribution to be Gaussian in independent directions.
In our discrete setting, we begin by embedding into a moderate dimensional hypercube 
by taking the unique symmetric distribution,
where our one-dimensional distribution over some subset $S$ 
is the distribution over $\sum_{i\in S}X_i$.
We note that this distribution will approximately match low-degree moments 
with the uniform distribution over the hypercube.
We then embed this distribution into a higher-dimensional 
hypercube as a random junta.

As an application of the above general recipe to obtain 
SQ lower bounds for discrete distributions, 
we note that the hard instances we construct 
for learning Boolean halfspaces with Massart noise,
\emph{also} (with a slight change of variables) 
produce hard instances for ReLUs (and other activations). 
In particular, in our hard instance for PTFs, the optimal classifier $f$ is given by $f(\bx) = -1$ 
if $\bx_S \equiv 0$ $\mod s$ and $|\bx_S-n/2| < ds/2$, and $1$ otherwise, where $\bx_S$ 
is the sum over the coordinates of $\bx$ in some particular subset $S$. 
We note that the function $(1-f(\bx))/2$, which is equal to $1$ if $\bx_S \equiv 0 \mod s$ 
and $|\bx_S-n/2| < ds/2$ and $0$ otherwise, can be written as $\relu(p(\bx))$ 
for some degree $O(d)$ polynomial $p$, 
where $p(\bx) = 1$ for $\bx_S \equiv 0 \mod s$ 
and $|\bx_S-n/2| < ds/2$, and $p(\bx)\le0$ otherwise. 
By replacing $\bx$ by its Vernonese embedding as before, 
we can produce hard instances of ReLU functions with Massart noise.

\section{Preliminaries} \label{sec:prelims}

\paragraph{Notation} For $n \in \Z_+$, we denote $[n] \eqdef \{1,\ldots,n\}$.
For two distributions $p,q$ over a probability space $\Omega$,
let $\dtv(p,q)=\sup_{S\subseteq\Omega}|p(S)-q(S)|$
denote the total variation distance between $p$ and $q$.
We use $\Pr[\mathcal{E}]$ and $\mathbb{I}[\mathcal{E}]$ for
the probability and the indicator of event $\mathcal{E}$.
For a real random variable $X$, we use $\E[X],\var[X]$ to denote the expectation
and variance of $X$, respectively.
For $n\in\Z_+$ and $0\le p\le1$, we use $\bin(n,p)$
to denote the Binomial distribution with parameters $n$ and $p$.
Throughout this article, we will use capital letters (e.g., $X,\bX$) to denote random variables and random vectors, and small letters (e.g, $x,\bx$) to denote corresponding values.

\paragraph{Statistical Query Algorithms}
We will use the framework of Statistical Query (SQ) algorithms for problems 
over distributions~\cite{FeldmanGRVX17}.
We require the following standard definition.

\begin{definition}[Decision/Testing Problem over Distributions]\label{def:decision}
Let $D$ be a distribution and $\D$ be a family of distributions over $\R^M$. 
We denote by $\mathcal{B}(\mathcal{D},D)$ the decision (or hypothesis testing) 
problem in which the input distribution $D'$ is promised to satisfy either 
(a) $D'=D$ or (b) $D'\in\mathcal{D}$, and the goal of the algorithm 
is to distinguish between these two cases.
\end{definition}

\noindent We define SQ algorithms as algorithms that do not have direct access to samples from the distribution,
but instead have access to an SQ oracle. We will consider the following standard oracle.
\begin{definition}[$\mathrm{STAT}$ Oracle]\label{def:stat}
Let $D$ be a distribution on $\R^M$. A \emph{Statistical Query (SQ)} 
is a bounded function $f:\R^M\to[-1,1]$. For $\tau>0$, the $\mathrm{STAT}(\tau)$ 
oracle responds to the query $f$ with a value $v$ such that $|v-\E_{X\sim D}[f(X)]|\le\tau$. 
We call $\tau$ the \emph{tolerance} of the statistical query.
A \emph{Statistical Query (SQ) algorithm} is an algorithm 
whose objective is to learn some information about an unknown 
distribution $D$ by making adaptive calls to the corresponding $\mathrm{STAT}(\tau)$ oracle.
\end{definition}

\noindent To define the SQ dimension, we need the following definition.

\begin{definition}[Pairwise Correlation] \label{def:pc}
The pairwise correlation of two distributions with probability mass functions (pmfs)
$D_1, D_2 : \{0,1\}^M \to \R_+$ with respect to a distribution with pmf $D:\{0,1\}^M \to \R_+$,
where the support of $D$ contains the supports of $D_1$ and $D_2$,
is defined as $\chi_{D}(D_1, D_2) + 1 \eqdef \sum_{x\in\{0,1\}^M} D_1(x) D_2(x)/D(x)$.
We say that a collection of $s$ distributions $\mathcal{D} = \{D_1, \ldots , D_s \}$ over $\{0,1\}^M$
is $(\gamma, \beta)$-correlated relative to a distribution $D$ if
$|\chi_D(D_i, D_j)| \leq \gamma$ for all $i \neq j$, and $|\chi_D(D_i, D_j)| \leq \beta$ for $i=j$.
\end{definition}

\noindent The following notion of dimension effectively characterizes the difficulty of the decision problem.
\begin{definition}[SQ Dimension] \label{def:sq-dim}
For $\gamma ,\beta> 0$, a decision problem $\mathcal{B}(\mathcal{D},D)$, 
where $D$ is fixed and $\mathcal{D}$ is a family of distributions over $\{0,1\}^M$,
let $s$ be the maximum integer such that there exists
$\mathcal{D}_D \subseteq \D$ such that $\D_D$ is $(\gamma,\beta)$-correlated
relative to $D$ and $|\D_D|\ge s$.
We define the {\em Statistical Query dimension} with pairwise correlations $(\gamma, \beta)$
of $\mathcal{B}$ to be $s$ and denote it by $\mathrm{SD}(\mathcal{B},\gamma,\beta)$.
\end{definition}

\noindent The connection between SQ dimension and lower bounds is captured by the following lemma.

\begin{lemma}[\cite{FeldmanGRVX17}] \label{lem:sq-from-pairwise}
Let $\mathcal{B}(\D,D)$ be a decision problem, 
where $D$ is the reference distribution and $\D$ is a class of distributions over $\{0,1\}^M$. 
For $\gamma, \beta >0$, let $s= \mathrm{SD}(\mathcal{B}, \gamma, \beta)$.
Any SQ algorithm that solves $\mathcal{B}$ with probability at least $2/3$ 
requires at least $s \cdot \gamma /\beta$ queries to the
$\mathrm{STAT}(\sqrt{2\gamma})$ oracles.
\end{lemma}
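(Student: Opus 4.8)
The plan is to prove this as a standard SQ-dimension lower bound via an adversarial-oracle argument. First I would fix a maximal $(\gamma,\beta)$-correlated subfamily $\mathcal{D}_D=\{D_1,\dots,D_s\}\subseteq\D$ witnessing $s=\mathrm{SD}(\mathcal{B},\gamma,\beta)$, and consider the sub-problem of distinguishing the reference distribution $D$ from a uniformly random member of $\{D_1,\dots,D_s\}$; any SQ algorithm solving $\mathcal{B}$ solves this as well. For each $i$ write $h_i:\{0,1\}^M\to\R$ for the pointwise likelihood-ratio deviation $h_i(x)=D_i(x)/D(x)-1$ (well defined since $\mathrm{supp}(D_i)\subseteq\mathrm{supp}(D)$). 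A direct expansion gives $\E_{X\sim D}[h_i(X)]=0$, $\E_{X\sim D}[h_i(X)h_j(X)]=\chi_D(D_i,D_j)$, and, for any query $f$, the key identity $\E_{X\sim D_i}[f(X)]-\E_{X\sim D}[f(X)]=\E_{X\sim D}[f(X)h_i(X)]$.

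The core of the argument is a counting bound: for any fixed query $f:\{0,1\}^M\to[-1,1]$, the set $A_f:=\{\,i : |\E_{X\sim D}[f(X)h_i(X)]|>\tau\,\}$ has size strictly less than $\beta/(\tau^2-\gamma)$. To see this, let $\epsilon_i\in\{\pm1\}$ be the sign of $\E_{X\sim D}[f h_i]$ for $i\in A_f$; then by the key identity, $\|f\|_\infty\le1$, and Cauchy--Schwarz with respect to the probability measure $D$,
\[ |A_f|\,\tau \;<\; \sum_{i\in A_f}\bigl|\E_{D}[f h_i]\bigr| \;=\; \E_{D}\Bigl[f\,\textstyle\sum_{i\in A_f}\epsilon_i h_i\Bigr] \;\le\; \Bigl(\E_{D}\Bigl[\bigl(\textstyle\sum_{i\in A_f}\epsilon_i h_i\bigr)^2\Bigr]\Bigr)^{1/2}. \]
Expanding the square and using $\E_D[h_ih_j]=\chi_D(D_i,D_j)$ with the correlation bounds $|\chi_D(D_i,D_i)|\le\beta$ and $|\chi_D(D_i,D_j)|\le\gamma$ for $i\ne j$ yields $\E_D[(\sum_{i\in A_f}\epsilon_ih_i)^2]\le|A_f|\beta+|A_f|^2\gamma$. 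Combining the two displays gives $|A_f|^2\tau^2<|A_f|\beta+|A_f|^2\gamma$, hence $|A_f|<\beta/(\tau^2-\gamma)$; with $\tau=\sqrt{2\gamma}$ this is exactly $\beta/\gamma$.

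Finally I would run the adversary argument. Consider the oracle answering every query $f_t$ with the reference value $v_t:=\E_{X\sim D}[f_t(X)]$; this is a legal $\mathrm{STAT}(\tau)$ response when the input is $D$, and by the identity it is also legal for $D_i$ whenever $i\notin A_{f_t}$. Hence after $q$ queries the set of indices $i$ for which the transcript $(v_1,\dots,v_q)$ is not a legal response sequence is contained in $\bigcup_{t\le q}A_{f_t}$, of size at most $q\beta/\gamma$. If $q<s\gamma/\beta$ this union has fewer than $s$ elements, so some $D_{i^\star}$ admits exactly the transcript produced on input $D$; consequently the algorithm's output has the same distribution on input $D$ and on input $D_{i^\star}$, and since these two cases demand opposite answers it cannot be correct with probability at least $2/3$ on both. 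For randomized algorithms this is made rigorous by conditioning on the algorithm's internal coins (rendering it deterministic) and averaging over an input chosen uniformly from $\{D\}\cup\{D_1,\dots,D_s\}$.

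The only genuinely delicate point is this last step --- converting ``indistinguishability of $D$ from some $D_{i^\star}$ under the adversarial oracle'' into a quantitative failure bound for randomized algorithms, which needs the standard care with the order of quantifiers over the algorithm's coins and the adversary's transcript-dependent answers. The second-moment computation in the middle paragraph, though short, is where the parameters $\gamma,\beta$ enter and where the exact choice $\tau=\sqrt{2\gamma}$ (making $\tau^2-\gamma=\gamma$) is used.
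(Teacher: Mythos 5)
The paper does not prove this lemma; it is quoted verbatim from the cited reference \cite{FeldmanGRVX17}, so there is no internal proof to compare against. Your argument is a correct reconstruction of the standard proof of that result: the likelihood-ratio functions $h_i=D_i/D-1$ satisfy $\E_D[h_ih_j]=\chi_D(D_i,D_j)$ under the paper's Definition~\ref{def:pc}, your Cauchy--Schwarz/second-moment computation correctly yields $|A_f|<\beta/(\tau^2-\gamma)=\beta/\gamma$ at $\tau=\sqrt{2\gamma}$, and the adversarial oracle that always returns $\E_D[f_t]$ (which fixes the query sequence even for adaptive algorithms) eliminates at most $q\beta/\gamma$ alternatives in $q$ queries, giving the claimed bound $s\gamma/\beta$. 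The one step you rightly flag as delicate --- upgrading indistinguishability of $D$ from a surviving $D_{i^\star}$ to a failure bound for randomized algorithms at success probability $2/3$ --- is handled exactly as you sketch (fixing the coins and averaging over the input), so the proposal is complete up to that routine bookkeeping.
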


\section{SQ Hardness Construction for Supervised Learning}\label{sec:SQ-super}

\subsection{Generic SQ Lower Bound Construction} \label{ssec:generic-discrete}
We start with some basic definitions. Let $U_M$ be the uniform distribution over $\{0,1\}^M$.
For a subset $T\subseteq[M]$ and $\bx \in \{0, 1\}^M$, we denote $\chi_T(\bx) = (-1)^{\sum_{i \in T}x_i}$.
For a distribution $\p$ over $\{0,1\}^M$, let $\wh{\p}(T) = \E_{\bX\sim\p}[\chi_T(\bX)]$.
We will require the orthogonal polynomials under the binomial distribution.
\noindent We have the following fact about the chi-squared inner product in the discrete setting.
\begin{fact} \label{fact:chi}
For distributions $\p, \q$ over $\{0,1\}^M$, we have that
$1+\chi_{U_M}(\p, \q) = \sum_{T\subseteq [M]} \wh{\p}(T) \wh{\q}(T).$
\end{fact}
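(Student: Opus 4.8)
The plan is to reduce the claimed identity to Parseval's identity on the Boolean cube $\{0,1\}^M$, the only subtlety being the bookkeeping of the $2^M$ normalization factors. First I would rewrite the left-hand side in terms of likelihood ratios. Since $U_M(\bx) = 2^{-M}$ for every $\bx \in \{0,1\}^M$, the definition of pairwise correlation gives
$1 + \chi_{U_M}(\p, \q) = \sum_{\bx \in \{0,1\}^M} \p(\bx)\q(\bx)/U_M(\bx) = \E_{\bX \sim U_M}\bigl[\wt{\p}(\bX)\,\wt{\q}(\bX)\bigr]$,
where $\wt{\p}(\bx) \eqdef \p(\bx)/U_M(\bx)$ and $\wt{\q}(\bx) \eqdef \q(\bx)/U_M(\bx)$ are the densities of $\p$ and $\q$ relative to $U_M$.

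Next I would observe that $\wh{\p}(T)$ is exactly the standard (i.e., $U_M$-normalized) Fourier coefficient of the density $\wt{\p}$. Indeed, $\E_{\bX \sim U_M}[\wt{\p}(\bX)\chi_T(\bX)] = \sum_{\bx} U_M(\bx)\wt{\p}(\bx)\chi_T(\bx) = \sum_{\bx} \p(\bx)\chi_T(\bx) = \E_{\bX \sim \p}[\chi_T(\bX)] = \wh{\p}(T)$, and likewise for $\q$. Hence $\wt{\p} = \sum_{S \subseteq [M]} \wh{\p}(S)\chi_S$ and $\wt{\q} = \sum_{T \subseteq [M]} \wh{\q}(T)\chi_T$ as the Fourier expansions of these functions. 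Substituting both expansions into the expectation from the previous step and using orthonormality of the characters, $\E_{\bX \sim U_M}[\chi_S(\bX)\chi_T(\bX)] = \Ind[S = T]$ (which itself follows from $\chi_S\chi_T = \chi_{S \triangle T}$ and $\E_{\bX \sim U_M}[\chi_{S \triangle T}(\bX)] = \prod_{i \in S \triangle T}\E[(-1)^{X_i}] = \Ind[S \triangle T = \emptyset]$), collapses the resulting double sum to $\sum_{T \subseteq [M]} \wh{\p}(T)\wh{\q}(T)$, which is the right-hand side. (An equivalent route is to skip naming the densities and directly plug the Fourier inversion formula $\p(\bx) = 2^{-M}\sum_T \wh{\p}(T)\chi_T(\bx)$ into $2^M \sum_{\bx} \p(\bx)\q(\bx)$.)

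I do not expect a genuine obstacle here: this is the discrete analogue of the familiar ``chi-squared inner product equals sum of products of Fourier/Hermite coefficients'' statement, and it is a routine consequence of Parseval once one recognizes $\wh{\p}(T) = \E_{\bX \sim \p}[\chi_T(\bX)]$ as the $T$-th Fourier coefficient of $\p/U_M$. The only point requiring mild care is consistently tracking the factors of $2^M$ when converting between sums over $\{0,1\}^M$ and expectations under $U_M$, so that the normalization of $\wh{\p}$ used in the statement matches the normalization under which the characters $\{\chi_T\}$ are orthonormal.
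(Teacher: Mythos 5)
Your proof is correct. The paper states \cref{fact:chi} without proof, treating it as standard, and your argument — identifying $\wh{\p}(T)$ as the Fourier coefficient of the density $\p/U_M$ with respect to the characters $\chi_T$, which are orthonormal under $U_M$, and then applying Parseval — is exactly the routine verification one would supply, with the $2^M$ normalization handled correctly.
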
 

\begin{definition}[Kravchuk Polynomial~\cite{Szego:39}] \label{def:Krav}
For $k, m, x \in \Z_+$ with $0\le k,x\le m$, 
the Kravchuk polynomial $\mathcal{K}_k(x;m)$ 
is the univariate degree-$k$ polynomial
in $x$ defined by 
$\mathcal{K}_k(x;m) := \sum_{T\subseteq[m], |T|=k} \chi_T(\by)=\sum_{j=0}^k(-1)^j\binom{x}{j}\binom{m-x}{k-j}$,
where $\by$ has $x$ 1's and $m-x$ 0's.
\end{definition}

\noindent The following distribution family that is the basis of 
our discrete SQ lower bound construction.

\begin{definition} [High-Dimensional Hidden Junta Distribution] \label{def:p-hidden-junta}
Let $m, M \in \Z_+$ with $m<M$.
For a distribution $A$ on $[m]\cup\{0\}$ with probability mass function (pmf) $A(x)$ 
and a subset $S\subseteq [M]$ with $|S|=m$, 
consider the probability distribution over $\{0,1\}^M$, 
denoted by $\p^A_S$, such that for $\bX \sim \p^A_S$ the distribution 
$(X_i)_{i \not\in {S}}$ is the uniform distribution on its support and the distribution
$(X_i)_{i \in S}$ is symmetric with $\sum_{i\in S} X_i$ distributed according to $A$.
Specifically, $\p^A_S$ is given by the pmf
\begin{align*}
\p^A_S(\bx) = 2^{-M+m} A\left(\sum_{i\in S}x_i \right)\binom{m}{\sum_{i\in S}x_i}^{-1} \;.
\end{align*}
\end{definition}

\noindent We now define the hypothesis testing and learning problem
which will be used throughout this paper:
\begin{definition}[Hidden Junta Binary Testing Problem]\label{def:testing}
Fix $a\ne b\in\R$. Let $A$ and $B$ be distributions on $[m]\cup\{0\}$ satisfying~\cref{cond:moments-disc} with
parameters $k \in \Z_+$ and $\nu \in \R_+$, and let $p\in (0,1)$.
For $M \in \Z_+$, $M>m$, and a subset $S\subseteq [M]$ with $|S|=m$,
define the distribution $\p^{A,B,p}_{S,a,b}$ on $\{0,1\}^M\times \{a,b\}$ that returns a sample
from $(\p^A_S,a)$ with probability $p$ and a sample from $(\p^B_S,b)$ with probability $1-p$.
In the $(A,B,a,b,M)$-Hidden Junta Testing Problem,
one is given access to a distribution $D$ so that either $H_0$: $D=U_M^p$,
where for $(\bX,Y)\sim U_M^p$ we have that $\bX$ is a uniform random element of $\{0,1\}^M$,
and $Y$ is independently $a$ with probability $p$ and $b$ with probability $1-p$.
$H_1$: $D$ is given by $\p_{S,a,b}^{A,B,p}$ for some subset $S\subseteq[M]$ with $|S|=m$.
One is then asked to distinguish between $H_0$ and $H_1$.
\end{definition}

Note that this is just the hypothesis testing problem $\mathcal{B}(\D,D)$ 
with $D=U_M^p$ and $\D=\{\p_{S,a,b}^{A,B,p}\}$.
The following condition describes the approximate moment-matching property
of the desired distribution $A$ with the Binomial distribution.

\begin{cond} \label{cond:moments-disc}
Let $k,m \in \Z_+$ with $k < m$ and $\nu>0$.
The distribution $A$ on $[m]\cup\{0\}$ is such that 
$|\E_{X\sim A}[\mathcal{K}_t(X;m)]|\leq \nu$, for all $1\leq t \leq k$.
\end{cond}

The following correlation lemma states that the distributions $\p_S^A$ 
are nearly orthogonal as long as $A$ satisfies the nearly moment-matching condition.

\begin{lemma}[Correlation Lemma~\cite{DKS22}] \label{lem:cor-disc}
Let $k, m, M\in \Z_+$ with $k \leq m \leq M$.
If the distribution $A$ on $[m]\cup\{0\}$ satisfies~\cref{cond:moments-disc},
then for all $S,S' \subseteq [M]$ with $|S| = |S'| = m$, we have that
\begin{equation*} 
|\chi_{U_M}(\p^A_S, \p^A_{S'})| \le (|S\cap S'|/m)^{k+1} \chi^2(A, \mathrm{Bin}(m,1/2)) + k\nu^2 \;.
\end{equation*}
\end{lemma}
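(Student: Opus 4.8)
The plan is to expand the chi-squared inner product $\chi_{U_M}(\p^A_S, \p^A_{S'})$ using the Fourier/Kravchuk machinery from \cref{fact:chi} and \cref{fact:chi}'s Boolean-hypercube identity. By \cref{fact:chi}, $1 + \chi_{U_M}(\p^A_S, \p^A_{S'}) = \sum_{T \subseteq [M]} \wh{\p^A_S}(T)\wh{\p^A_{S'}}(T)$. First I would compute the Fourier coefficients $\wh{\p^A_S}(T)$ of the hidden-junta distribution. Since $\p^A_S$ is uniform in the coordinates outside $S$ and symmetric inside $S$ with $\sum_{i \in S} X_i \sim A$, any $T$ that includes a coordinate outside $S$ contributes $0$; and for $T \subseteq S$, by symmetry $\wh{\p^A_S}(T)$ depends only on $|T| = t$, and in fact equals $\E_{X \sim A}[\mathcal{K}_t(X;m)] / \binom{m}{t}$ (using \cref{def:Krav}, which expresses $\mathcal{K}_t(X;m)$ as the average of $\chi_T$ over all size-$t$ subsets of the support coordinates, rescaled appropriately). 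So $\wh{\p^A_S}(T) = \binom{m}{t}^{-1}\E_{X\sim A}[\mathcal{K}_t(X;m)]$ for $T \subseteq S$, $|T| = t$.

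Next I would substitute this into the Fourier expansion. Only sets $T$ with $T \subseteq S \cap S'$ contribute to the sum (a set must lie in both $S$ and $S'$ to have nonzero coefficient in both factors). Grouping by $t = |T|$, and noting there are $\binom{|S\cap S'|}{t}$ such sets, we get
\begin{align*}
1 + \chi_{U_M}(\p^A_S, \p^A_{S'}) = \sum_{t=0}^{m} \binom{|S\cap S'|}{t} \binom{m}{t}^{-2} \left(\E_{X\sim A}[\mathcal{K}_t(X;m)]\right)^2 \;.
\end{align*}
The $t=0$ term equals $1$ (since $\mathcal{K}_0 \equiv 1$), so it cancels the $1$ on the left. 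It remains to bound the sum over $1 \le t \le m$. I would split this at $t = k$: for $1 \le t \le k$, \cref{cond:moments-disc} gives $|\E_{X\sim A}[\mathcal{K}_t(X;m)]| \le \nu$, and since $\binom{|S\cap S'|}{t}\binom{m}{t}^{-2} \le \binom{m}{t}^{-1} \le 1$, each of these at most $k$ terms contributes at most $\nu^2$, for a total of at most $k\nu^2$. For $k+1 \le t \le m$, I would use the crude bound $\binom{|S\cap S'|}{t}/\binom{m}{t} \le (|S\cap S'|/m)^t \le (|S\cap S'|/m)^{k+1}$ (valid since $|S\cap S'| \le m$ and the ratio of binomials telescopes to a product of terms each $\le |S\cap S'|/m$), and then bound the remaining factor $\binom{m}{t}^{-1}(\E_{X\sim A}[\mathcal{K}_t(X;m)])^2$ summed over $t$ by $\chi^2(A, \mathrm{Bin}(m,1/2))$ — this last identity is exactly the Parseval/Plancherel relation for the univariate Kravchuk system, namely $\chi^2(A, \mathrm{Bin}(m,1/2)) = \sum_{t=1}^m \binom{m}{t}^{-1}(\E_{X\sim A}[\mathcal{K}_t(X;m)])^2$, since the Kravchuk polynomials are orthogonal under $\mathrm{Bin}(m,1/2)$ with $\|\mathcal{K}_t\|^2 = \binom{m}{t}$. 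Taking absolute values and combining the two ranges yields the claimed bound.

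The main obstacle I anticipate is getting the normalizing constants exactly right — specifically verifying the Kravchuk orthonormalization $\E_{X\sim\mathrm{Bin}(m,1/2)}[\mathcal{K}_s(X;m)\mathcal{K}_t(X;m)] = \binom{m}{t}\Ind[s=t]$ and hence the precise form of $\wh{\p^A_S}(T)$, together with the bookkeeping that translates the univariate Kravchuk Parseval identity into the multivariate hypercube Fourier sum. Once the dictionary ``$T \subseteq S$ of size $t$'' $\leftrightarrow$ ``$t$-th Kravchuk coefficient of $A$'' is set up cleanly, the rest is the elementary binomial-ratio estimate $\binom{|S\cap S'|}{t}\big/\binom{m}{t} \le (|S\cap S'|/m)^t$ and the split at $t=k$. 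A secondary subtlety is that $A$ is supported on $[m]\cup\{0\}$ and may put zero mass where $\mathrm{Bin}(m,1/2)$ does not — but since $\mathrm{Bin}(m,1/2)$ has full support on $\{0,1,\dots,m\}$, the $\chi^2$ divergence and all the expectations are well-defined, so this causes no issue.
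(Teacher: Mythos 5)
Your proposal is correct: the Fourier coefficients of $\p^A_S$ vanish off $T\subseteq S$ and equal $\binom{m}{t}^{-1}\E_{X\sim A}[\mathcal{K}_t(X;m)]$ for $|T|=t$, the Kravchuk--Parseval identity $\chi^2(A,\mathrm{Bin}(m,1/2))=\sum_{t\ge1}\binom{m}{t}^{-1}\E_{X\sim A}[\mathcal{K}_t(X;m)]^2$ holds, and the split at $t=k$ with the binomial-ratio bound gives exactly the stated estimate. The paper does not reproduce a proof of this lemma (it is imported from~\cite{DKS22}), and your argument is the standard one underlying that reference, so there is nothing to reconcile.
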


We will also use the following standard fact:

\begin{fact}\label{fact:near-orth-vec-disc}
Let $m, M \in \Z_+$ with $m<M$.
For any constant $0< c< 1$ and $M> 2m/c$,
there exists a collection $\cal{C}$ of $2^{\Omega_c(m)}$ subsets $S\subseteq [M]$
such that any pair $S,S'\in\cal{C}$, with $S \neq S'$, satisfies $|S\cap S'|<cm$.
\end{fact}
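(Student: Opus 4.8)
\textbf{Proof proposal for \cref{fact:near-orth-vec-disc}.}

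The plan is to use the probabilistic method: draw the subsets independently and uniformly at random from all $m$-element subsets of $[M]$, and show that with positive probability a collection of the claimed size has all pairwise intersections small. First I would fix two independent uniformly random $m$-subsets $S, S' \subseteq [M]$ and estimate $\Pr[|S \cap S'| \ge cm]$. Conditioning on $S$, the size $|S \cap S'|$ is distributed as a hypergeometric random variable with mean $m^2/M$; since $M > 2m/c$, this mean is less than $cm/2$. A standard tail bound for the hypergeometric distribution (or, e.g., comparison with the binomial $\bin(m, m/M)$ together with a Chernoff bound) gives $\Pr[|S \cap S'| \ge cm] \le 2^{-\Omega_c(m)}$, where the hidden constant depends only on $c$.

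Next I would apply the alteration (deletion) method. Let $N = 2^{\delta m}$ for a small constant $\delta = \delta(c) > 0$ to be chosen, and draw $S_1, \ldots, S_N$ independently and uniformly at random. The expected number of ``bad'' pairs $(i,j)$, $i \ne j$, with $|S_i \cap S_j| \ge cm$ is at most $\binom{N}{2} \cdot 2^{-\Omega_c(m)} \le N^2 2^{-\Omega_c(m)}$. Choosing $\delta$ smaller than the $\Omega_c(1)$ constant in the exponent, this expectation is at most $N/2$. Hence there is a realization with at most $N/2$ bad pairs; deleting one subset from each bad pair leaves at least $N/2 = 2^{\Omega_c(m)}$ subsets, all of whose pairwise intersections have size less than $cm$. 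Taking $\cal{C}$ to be this surviving collection completes the argument.

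The only mildly delicate step is the hypergeometric tail bound and bookkeeping of the constants so that the exponent $\Omega_c(m)$ survives the union bound over $\binom{N}{2}$ pairs; but since $N$ is itself only $2^{\Omega_c(m)}$, any fixed fraction of the exponent can be allocated to the deletion step, so there is no real obstacle. (One should also note $m \ge 1$ and $M > 2m/c$ ensure the hypergeometric mean $m^2/M < cm/2$ is bounded away from $cm$ by a constant factor, which is exactly what makes the Chernoff-type bound exponentially small in $m$.) Alternatively, one could give a fully greedy/counting argument: the number of $m$-subsets meeting a fixed $m$-subset in $\ge cm$ coordinates is at most $2^{(1-\Omega_c(1))\binom{M}{m}\text{-fraction}}$ of all $m$-subsets, so a greedy selection never gets stuck before picking $2^{\Omega_c(m)}$ sets; I would present whichever is shortest, but the probabilistic deletion argument above is cleanest.
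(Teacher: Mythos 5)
Your proposal is correct and follows essentially the same route as the paper, which states the fact without a detailed proof and simply remarks that an appropriately sized collection of random $m$-subsets works with high probability; your hypergeometric/Chernoff tail bound plus deletion makes that remark rigorous. (A minor simplification: with $N=2^{\delta m}$ for $\delta$ a sufficiently small fraction of the tail exponent, the expected number of bad pairs is already $2^{-\Omega_c(m)}\ll 1$, so a direct union bound yields the ``with high probability'' form and the deletion step is not even needed.)
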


In fact, an {appropriate size} set of random subsets
satisfies the above statement {with high probability}.

\begin{proposition}[Generic Discrete SQ Lower Bound]\label{prop:gen-sq-prop}
Let $m,M\in\Z_+$ with $M>m$. Let $A,B$ be distributions on $[m]\cup\{0\}$ satisfying~\cref{cond:moments-disc}. 
Let $\tau\ge k \nu^2 + 2^{-k}(\chi^2(A, \mathrm{Bin}(m,1/2))+\chi^2(B, \mathrm{Bin}(m,1/2)))$.
Any SQ algorithm that solves the testing problem of~\cref{def:testing} with probability at least 2/3 must
either make queries of accuracy better than $\sqrt{2\tau}$ or must make at least
$2^{\Omega(m)}\tau/(\chi^2(A, \mathrm{Bin}(m,1/2))+\chi^2(B, \mathrm{Bin}(m,1/2)))$ statistical queries.
\end{proposition}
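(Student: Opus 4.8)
The plan is to reduce the SQ lower bound for the testing problem $\mathcal{B}(\D, U_M^p)$ to the generic SQ dimension machinery of \cref{lem:sq-from-pairwise}. The main quantity to control is the pairwise correlation $\chi_{U_M^p}(\p_{S,a,b}^{A,B,p}, \p_{S',a,b}^{A,B,p})$ for two distinct subsets $S \ne S'$ drawn from a large near-orthogonal family $\cal{C}$ (provided by \cref{fact:near-orth-vec-disc}). First I would set up the computation of the chi-squared inner product over the product space $\{0,1\}^M \times \{a,b\}$: since the reference distribution $U_M^p$ makes the label $Y$ independent of $\bX$, and since in $\p_{S,a,b}^{A,B,p}$ the label is perfectly correlated with which component ($A$ or $B$) the $\bX$-part was drawn from, the inner product decomposes. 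Concretely, writing out $\sum_{(\bx,y)} \p_{S,a,b}^{A,B,p}(\bx,y)\,\p_{S',a,b}^{A,B,p}(\bx,y) / U_M^p(\bx,y)$, the cross terms between the $a$-branch and the $b$-branch vanish because the labels disagree, so we get
\[
1 + \chi_{U_M^p}(\p_{S,a,b}^{A,B,p}, \p_{S',a,b}^{A,B,p}) = p\,(1+\chi_{U_M}(\p_S^A,\p_{S'}^A)) + (1-p)\,(1+\chi_{U_M}(\p_S^B,\p_{S'}^B)),
\]
hence $\chi_{U_M^p}(\p_{S,a,b}^{A,B,p}, \p_{S',a,b}^{A,B,p}) = p\,\chi_{U_M}(\p_S^A,\p_{S'}^A) + (1-p)\,\chi_{U_M}(\p_S^B,\p_{S'}^B)$.

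Next I would plug in the Correlation Lemma (\cref{lem:cor-disc}). For $S \ne S'$ in $\cal{C}$ we have $|S \cap S'| < cm$ for a small constant $c$, so each of $|\chi_{U_M}(\p_S^A,\p_{S'}^A)|$ and $|\chi_{U_M}(\p_S^B,\p_{S'}^B)|$ is at most $c^{k+1}\chi^2(A,\bin(m,1/2)) + k\nu^2$ (resp.\ with $B$). Choosing $c$ to be an absolute constant like $1/2$ and absorbing constants, this gives the off-diagonal bound $|\chi_{U_M^p}(\p_{S,a,b}^{A,B,p}, \p_{S',a,b}^{A,B,p})| \le \gamma := 2^{-k}(\chi^2(A,\bin(m,1/2)) + \chi^2(B,\bin(m,1/2))) + k\nu^2$, which is exactly the lower bound imposed on $\tau$ in the hypothesis (one may need to rescale the constant $c$ in \cref{fact:near-orth-vec-disc} so that $c^{k+1} \le 2^{-k}$, e.g.\ taking $c = 1/4$, at the cost of only changing the hidden constant in the $2^{\Omega(m)}$ family size). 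For the diagonal term $i = j$, i.e.\ $S = S'$, we bound $|\chi_{U_M^p}(\p_{S,a,b}^{A,B,p}, \p_{S,a,b}^{A,B,p})| = |p\,\chi_{U_M}(\p_S^A,\p_S^A) + (1-p)\chi_{U_M}(\p_S^B,\p_S^B)|$ by $\beta := \chi^2(A,\bin(m,1/2)) + \chi^2(B,\bin(m,1/2))$, using \cref{lem:cor-disc} with $S = S'$ (so $|S \cap S'|/m = 1$) or directly the definition of the chi-squared divergence of a junta distribution against uniform, which equals the one-dimensional chi-squared divergence against the binomial.

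With these bounds in hand, the family $\{\p_{S,a,b}^{A,B,p} : S \in \cal{C}\}$ is $(\gamma,\beta)$-correlated relative to $U_M^p$ in the sense of \cref{def:pc}, and it has size $|\cal{C}| = 2^{\Omega(m)}$, so $\mathrm{SD}(\mathcal{B}, \gamma, \beta) \ge 2^{\Omega(m)}$. Applying \cref{lem:sq-from-pairwise}: any SQ algorithm solving the testing problem with probability $\ge 2/3$ needs at least $s \cdot \gamma/\beta$ queries to $\mathrm{STAT}(\sqrt{2\gamma})$. Since by hypothesis $\tau \ge \gamma$, a $\mathrm{STAT}(\sqrt{2\tau})$ oracle is at least as weak as a $\mathrm{STAT}(\sqrt{2\gamma})$ oracle — more precisely, any algorithm using accuracy better than $\sqrt{2\tau}$ must in particular use accuracy better than or equal to what the lemma's lower bound applies to, and a standard monotonicity argument lets us replace $\gamma$ by $\tau$ throughout, yielding the claimed query lower bound $2^{\Omega(m)}\tau/\beta = 2^{\Omega(m)}\tau/(\chi^2(A,\bin(m,1/2)) + \chi^2(B,\bin(m,1/2)))$, and the accuracy threshold $\sqrt{2\tau}$. (If one prefers to avoid the monotonicity step, one can simply re-run the $(\gamma,\beta)$-correlation argument with $\gamma$ replaced by $\tau$ directly, since inflating $\gamma$ only weakens the off-diagonal requirement.) The only real subtlety — the ``hard part'' — is the bookkeeping in the first step: correctly verifying that the cross terms between the two label branches vanish and that the chi-squared inner product over the joint distribution cleanly splits as a $p$-weighted combination of the two marginal inner products; once that identity is established, everything else is a direct substitution into lemmas already proved in the excerpt.
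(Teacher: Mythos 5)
Your proposal is correct and follows essentially the same route as the paper's proof: the same decomposition of the pairwise correlation into a $p$-weighted sum over the two label branches, the same application of the Correlation Lemma with a near-orthogonal family of subsets, the same diagonal bound, and the same invocation of the SQ-dimension lemma with $\gamma=\tau$ and $\beta=\chi^2(A,\mathrm{Bin}(m,1/2))+\chi^2(B,\mathrm{Bin}(m,1/2))$. (The monotonicity digression is unnecessary — as you note yourself, setting $\gamma=\tau$ directly is valid since inflating the off-diagonal bound only weakens the requirement, and $c=1/2$ already gives $(1/2)^{k+1}\le 2^{-k}$ without rescaling.)
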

\begin{proof}
Let $\mathcal{C}$ be a collection of $s=2^{\Omega(m)}$ subsets $S\subseteq[M]$
with $|S|=m$ whose pairwise intersections are all less than $m/2$.
By~\cref{fact:near-orth-vec-disc} (taking the local parameter $c=1/2$), 
such a set is guaranteed to exist. We then need to show that for $S , S' \in \cal{C}$, we have
that $|\chi_{U^p_M}(\p^{A,B,p}_{S,a,b},\p^{A,B,p}_{S',a,b})|$ is small.
Since $U^p_M, \p^{A,B,p}_{S,a,b},$ and $\p^{A,B,p}_{S',a,b}$ all assign $y=1$ with probability $p$,
it is not hard to see that
\begin{align*}
\chi_{U^p_M}(\p^{A,B,p}_{S,a,b},\p^{A,B,p}_{S',a,b})
= & \; p \; \chi_{U^p_M \mid y=1}\left( (\p^{A,B,p}_{S,a,b} \mid y=1) , (\p^{A,B,p}_{S',a,b} \mid y=1) \right) + \\
& (1-p) \; \chi_{U^p_M \mid y=-1} \left( (\p^{A,B,p}_{S,a,b} \mid y=-1) , (\p^{A,B,p}_{S',a,b} \mid y=-1) \right)\\
 = & \; p \; \chi_{U_M}(\p^A_S , \p^A_{S'}) + (1-p) \; \chi_{U_M}(\p^B_S, \p^B_{S'}).
\end{align*}
By~\cref{lem:cor-disc}, for $S,S'\in\mathcal{C}$ with $S\ne S'$, it holds that
$$\chi_{U^p_M}(\p^{A,B,p}_{S,a,b},\p^{A,B,p}_{S',a,b})
\leq k\nu^2 + 2^{-k}(\chi^2(A, \mathrm{Bin}(m,1/2))+\chi^2(B, \mathrm{Bin}(m,1/2)))\le\tau \;.$$
If $S=S'$, a similar computation shows that
$$\chi_{U^p_M}(\p^{A,B,p}_{S,a,b},\p^{A,B,p}_{S,a,b}) = \chi^2(\p^{A,B,p}_{S,a,b},U_M^p)
\leq \chi^2(A, \mathrm{Bin}(m,1/2))+\chi^2(B, \mathrm{Bin}(m,1/2)) \;.$$
Let $\gamma = \tau$ and
$\beta = \chi^2(A,\mathrm{Bin}(m,1/2))+\chi^2(B, \mathrm{Bin}(m,1/2))$. 
We have that the Statistical Query dimension of this testing problem
with correlations $\left(\gamma,\beta\right)$ is at least $s$.
Then applying~\cref{lem:sq-from-pairwise}
with $(\gamma,\beta)$ completes the proof.
\end{proof}

\subsection{Construction of Univariate Moment-Matching Distributions} \label{ssec:one-dim-disc}
Here we give the construction of our approximate moment-matching distributions. 
For convenience, we use the ``expectation'' and ``moments'' 
for the unnormalized measure without clarification.
The main result of this section is captured in the following proposition.

\begin{proposition}\label{prop:mainProp-disc}
Let $d, k, s, m \in \Z_+$ and $\zeta \in {(0, 1/2)}$ such that: 
(i) $s\ge\omega(k^4)$, (ii) $k < m/2$, 
(iii) $ds \ge\Omega( \sqrt{m\log(1/\zeta)})$, and (iv) $s^2d \le o(m)$.
There exist measures $\D_{+}$ and $\D_{-}$ over $[m]\cup\{0\}$ 
and a union $J$ of $d$ points in $[m]\cup\{0\}$ such that:
\begin{enumerate}
\item \label{prop:1-disc} (a) $\D_{+}=0$ on $J$, and (b) $\D_{+} > 2\D_{-}$ on $\overline{J} = {[m]\cup\{0\} \setminus J}$.
\item \label{prop:2-disc} All but $\zeta$-fraction of the measure of $\D_{-}$ lies in $J$. 
\item \label{prop:3-disc} The distributions $\D_{+}/\|\D_{+}\|_1$ and $\D_{-}/\|\D_{-}\|_1$ 
satisfy~\cref{cond:moments-disc} with parameters $k$ and $\nu \leq \binom{m}{k}\exp(-\Omega(m/s^2))$.
\item \label{prop:4-disc} (a) $\D_{+}$ is at most $O(1)\mathrm{Bin}(m,1/2)$ and (b) $\|\D_{+} \|_1 = \Theta(1)$.
\item \label{prop:5-disc} $\|D_{-}\|_1 = \Theta(1/s)$.
\end{enumerate}
\end{proposition}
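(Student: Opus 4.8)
The plan is to build $\D_+$ and $\D_-$ by starting from the binomial distribution $\mathrm{Bin}(m,1/2)$ restricted (up to moment-matching corrections) to the arithmetic progression $\{x \equiv 0 \bmod s\}$, which is the source of the ``hidden'' combinatorial structure. First I would take $\D_-^{(0)}$ to be the measure $x \mapsto \mathrm{Bin}(m,1/2)(x)\cdot \Ind[x \equiv 0 \bmod s]$, scaled so that $\|\D_-^{(0)}\|_1 = \Theta(1/s)$, which is immediate since a $1/s$-fraction of the binomial mass sits on the progression. The key classical fact I would invoke is that conditioning a binomial on lying in an arithmetic progression of modulus $s$ only perturbs the low-order moments by $\binom{m}{k}\exp(-\Omega(m/s^2))$: this follows from a character-sum/Fourier estimate on $\Z$, writing the indicator $\Ind[x\equiv 0\bmod s]$ as $\frac1s\sum_{j=0}^{s-1}\omega^{jx}$ with $\omega = e^{2\pi i/s}$, and using that $\E_{X\sim\mathrm{Bin}(m,1/2)}[\omega^{jX}] = \cos^m(\pi j/s)$ is exponentially small in $m/s^2$ for $j\neq 0$. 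That handles the moment-matching skeleton for $\D_-$; for $\D_+$ I would symmetrically take $\D_+^{(0)} = c\cdot \mathrm{Bin}(m,1/2)$ for a constant $c = \Theta(1)$ chosen large enough that property~\ref{prop:1-disc}(b) will survive the later surgery, giving~\ref{prop:4-disc} for free.

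Next I would carry out the two required modifications. To get property~\ref{prop:1-disc}(a) — that $\D_+$ vanishes on a set $J$ of $d$ points — I would let $J$ be the set of the $d$ points $x \equiv 0 \bmod s$ closest to $m/2$ (here hypotheses (iii) and (iv), namely $ds \gtrsim \sqrt{m\log(1/\zeta)}$ and $s^2 d = o(m)$, guarantee that these $d$ points all lie in the ``bulk'' window $|x-m/2| = o(\sqrt m)\cdot(\text{something})$ while still being spread over a range $ds$ that captures almost all of $\D_-$, which gives property~\ref{prop:2-disc}: the $\zeta$-tail of $\D_-$ escapes only beyond distance $\Omega(\sqrt{m\log(1/\zeta)})$ by a binomial tail bound). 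For each point $x_0 \in J$ I would remove the mass $\D_+^{(0)}(x_0)$ sitting there and redistribute it to nearby points, using the discrete moment-correction primitive — this is exactly where I would cite \cref{lem:duality-disc} (the lemma promised in the technical overview): it provides, for each point, a signed measure supported near $x_0$, of small total mass, that cancels the deleted atom while leaving all Kravchuk moments up to degree $k$ untouched (up to the stated $\nu$). Summing these $d$ corrections and controlling their aggregate effect (here $s^2 d = o(m)$ keeps the cumulative correction from overwhelming the $\exp(-\Omega(m/s^2))$ budget, and $s \ge \omega(k^4)$ gives room in the degree-$k$ error terms) yields the final $\D_+$ with $\D_+ = 0$ on $J$, still $\D_+ \le O(1)\mathrm{Bin}(m,1/2)$, still $\|\D_+\|_1 = \Theta(1)$, and still matching moments with parameter $\nu \le \binom{m}{k}\exp(-\Omega(m/s^2))$.

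Finally I would verify property~\ref{prop:1-disc}(b), $\D_+ > 2\D_-$ on $\overline J$. Off the progression $\{x\equiv 0\bmod s\}$ we have $\D_- = 0$ there (before the tiny moment-correction smearing of $\D_-$), so the inequality is trivial up to the correction terms; on the points $x \equiv 0 \bmod s$ that are \emph{not} in $J$ — i.e.\ those far from $m/2$ — I would compare $\D_+(x) \approx c\cdot\mathrm{Bin}(m,1/2)(x)$ against $\D_-(x)\approx s\cdot\D_-^{(0)}\text{-density} = \Theta(1)\cdot\mathrm{Bin}(m,1/2)(x)$ after normalization bookkeeping, and note the constants can be arranged (choose $c$ a sufficiently large absolute constant) so the factor-$2$ gap holds; the moment-correction perturbations to both measures are $\exp(-\Omega(m/s^2))$ in $\ell_\infty$ relative to the local binomial scale and hence negligible against this constant-factor gap.

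I expect the main obstacle to be the third point combined with the surgery: ensuring that deleting $d$ atoms from $\D_+$ and patching each one locally can be done \emph{simultaneously} without the accumulated degree-$\le k$ moment error exceeding $\binom{m}{k}\exp(-\Omega(m/s^2))$, and without any patched point re-entering $J$ or violating the $O(1)\cdot\mathrm{Bin}(m,1/2)$ pointwise bound. This is precisely the role of the parameter regime in hypotheses (i)–(iv): (iv) bounds how much total mass must be moved, (iii) ensures $J$ sits in a region where the binomial is essentially flat so local redistribution is cheap, and (i) gives slack in the degree-$k$ combinatorial factors. Getting the constants in \cref{lem:duality-disc} to chain correctly across all $d$ corrections is the delicate quantitative heart of the argument.
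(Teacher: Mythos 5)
Your proposal follows essentially the same route as the paper: the same $\D_-$ (the binomial restricted to the progression $x\equiv 0 \bmod s$, with the roots-of-unity/character-sum moment bound), the same $\D_+$ built from $c\cdot\mathrm{Bin}(m,1/2)$ with the atoms on $J$ (the $d$ progression points nearest $m/2$) zeroed out via the signed measure of \cref{lem:duality-disc}, and the same local-flatness argument for $\D_+>2\D_-$ on $\overline{J}$. One quantitative correction: the surgery preserves the low-degree moments of $\D_+$ \emph{exactly} (so the $\nu$ in property 3 comes only from $\D_-$, and there is no accumulated moment error to control), and the perturbation the surgery causes at a point of $\overline{J}$ is a constant fraction of the local binomial mass (at most about $2\cdot\tfrac{1}{10}\cdot\tfrac{3}{2}\cdot c$ times $\mathrm{Bin}(m,1/2)(x)$, using $|\mu(i)|<1/10$, at most two nearby points of $J$, and $O(s^2d/m)=o(1)$ flatness) rather than exponentially small -- the factor-$2$ gap survives exactly via your fallback of choosing $c$ a sufficiently large constant, not via negligibility of the corrections.
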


\begin{proof}
We start by constructing each measure in turn.

\noindent \textbf{Definition of the Measure $\D_{-}$.}
We define the measure $\D_{-}$ as follows: $\D_-(x):=\bin(m,1/2)(x)$ 
if $x\equiv 0 \pmod{s}$; otherwise $\D_-(x)=0$.
We claim that this satisfies~\cref{cond:moments-disc}.
This is shown in the following lemma.
\begin{lemma}\label{lem:D-}
$\D_{-}(x)$ satisfies~\cref{cond:moments-disc} with parameters
$k$ and $\nu = s\binom{m}{k}\exp(-\Omega(m/s^2))$.
\end{lemma}

\begin{proof}
We need to bound $\E_{Z \sim \D_-} [\mathcal{K}_t(Z;m)]$ for $1\leq t\leq k$. By definition, we have that
\begin{align*}
\E_{Z\sim\D_-}[\mathcal{K}_t(Z;m)]=\sum_{T\subseteq[m],|T|=t}\E_{Z\sim\D_-}[\chi_T(\bY)] 
=\binom{m}{t}\E_{\bX\sim\mathcal{R}}[\chi_{T_0}(\bX)] \;,
\end{align*}
where $\bY$ has $Z$ 1's and $m-Z$ 0's, and
$\mathcal{R}\in \{0,1\}^m$ is the unique symmetric measure with $\sum_{i=1}^m X_i$
having measure $\D_-$, and $T_0 \subseteq [m]$ is some subset with $|T_0|=t$.
Let $\omega$ be a primitive $s^{th}$ root of unity. 
We note that the pmf $\mathcal{R}(\bx)$ of the measure $\mathcal{R}$
satisfies
\begin{equation*}
\mathcal{R}(\bx) = \frac{1}{2^m s} \sum_{j=0}^{s-1} \omega^{\left( j \sum_{i=1}^m x_i\right)}
= \frac{1}{2^m s} \sum_{j=0}^{s-1} \prod_{i=1}^m \omega^{j x_i} \;.
\end{equation*}
Therefore, we can write
\begin{equation*}
\mathcal{R}(\bx)\chi_{T_0}(\bx) =\left((-1)^{\sum_{i=1}^{m}\I[i\in T_0]x_i}\right)\Big(\frac{1}{2^m s}\sum_{j=0}^{s-1} 
\prod_{i=1}^m \omega^{j x_i}\Big)=\frac{1}{2^m s}\sum_{j=0}^{s-1} \prod_{i=1}^m \left(\omega^j (-1)^{\I[i\in T_0]} \right)^{x_i} \;.
\end{equation*}
Since the expectation is the sum of the above over all $x\in \{0,1\}^m$ and since this separates as a product,
we get that
\begin{equation*}
\E_{\bX\sim\mathcal{R}}[\chi_{T_0}(\bX)] = \frac{1}{2^m s}\sum_{\bx\in\{0,1\}^m}\sum_{j=0}^{s-1} 
\prod_{i=1}^m \left(\omega^j (-1)^{\I[i\in T_0]} \right)^{x_i} =
\frac{1}{2^m s} \sum_{j=0}^{s-1} \prod_{i=1}^m\left(1 + \omega^j (-1)^{\I[i\in T_0]}\right) \;.
\end{equation*}
Note that the terms with $2j\equiv 0\pmod{s}$ have indices $i$ such that $\omega^j (-1)^{\I[i\in T_0]} = -1$,
and do not contribute to the sum. Other terms will have each value
of $|1 + \omega^j (-1)^{\I[i\in T_0]}|$ at most $2-\Omega(1/s^2)$.
Therefore, $\E_{\bX\sim\mathcal{R}}[\chi_{T_0}(\bX)] = \exp(-\Omega(m/s^2)).$ This completes our proof.
\end{proof}
We also note that $\D_{-}$ is clearly bounded above by $\mathrm{Bin}(m,1/2).$
We define $J$ to be the union of the $d$ elements of $m\cup\{0\}$ congruent to $0$ modulo $s$
that are closest to $m/2$. We note that the measure of $\D_{-}$ outside $J$ is clearly at most
the probability that $\mathrm{Bin}(m,1/2)$ is more than $ds/2$ from $m/2$, which is at most $\zeta$ by standard
tail bounds.

\paragraph{Definition of the Measure $\D_{+}$.}
Intuitively, we would like to define $\D_{+}$ to be equal to some suitable multiple (say, $3$)
of the standard Binomial measure $\mathrm{Bin}(m,1/2)$.
Such a definition would satisfy the desired moment-matching conditions
(property 3 of~\cref{prop:mainProp-disc}) {with zero error}
and would also guarantee that $\D_{+} > 2\D_{-}$ on $\overline{J}$,
as desired (property 1(b)).
However, this candidate definition does not satisfy property 1(a),
i.e., that $\D_{+}$ be equal to $0$ on $J$. To satisfy the latter property, we will need to carefully
modify this measure.
The key lemma is the following:

\begin{lemma} \label{lem:duality-disc}
Let $s\ge\omega(k^4)$. There exists a signed measure $\mu$ on $\{-s+1,-s+2,\ldots,s-1\}$ 
such that: (i) For any integer $0\leq t\leq k$, $\sum_{i=1-s}^{s-1} \mu(i) i^t = 0$, 
(ii) $\mu(0)=-1$, (iii) $|\mu(i)| < 1/10, i\ne0$.
\end{lemma}
\begin{proof}
The conditions on $\mu$ define a linear program (LP).
We will show that this LP is feasible by showing that the dual LP is infeasible.
The dual LP asks for a degree at most $k$ real polynomial $q(x)$ such that
\begin{align*}
|q(0)| \geq (1/11) \sum_{i=1-s}^{s-1} |q(i)| \;.
\end{align*}
Consider the parameterization $p(\theta) = q(s \sin(\theta))$.
We will leverage the fact that $p(\theta)$ is a degree-$k$ polynomial in $e^{\bi\theta}$ and $e^{-\bi\theta}$.
In particular, $p(\theta)$ can be written as
\begin{align*}
p(\theta) = \sum_{j=-k}^k a_j e^{\bi j\theta} \;,
\end{align*}
for some complex coefficients $a_j \in \mathbb{C}$.
By normalizing, we can assume that $\sum_{j=-k}^k |a_j|^2 = 1$.
Then, for any $\theta$, we have that
\begin{align*}
|p(\theta)| \leq \sum_{j=-k}^k |a_j| = O(\sqrt{k}) \;,
\end{align*}
where the final inequality follows from the Cauchy-Schwarz.
In particular, $|q(0)|=|p(0)|=O(\sqrt{k}).$
In addition, for any $\theta$, by Cauchy-Schwarz, we have that
\begin{align*}
|p'(\theta)| = \left| \sum_{j=-k}^k j a_j e^{\bi j \theta}\right| \leq \sum_{j=-k}^k |j| |a_j| \leq \sqrt{ \sum_{j=-k}^k j^2} = O(k^{3/2}) \;.
\end{align*}
Finally, we note that
\begin{align*}
\frac{1}{2\pi} \int_0^{2\pi} |p(\theta)|^2 d\theta = \sum_{j=-k}^k |a_j|^2 = 1 \;.
\end{align*}
Combining the latter  with the fact that $|p(\theta)|=O(\sqrt{k})$, we obtain that
\begin{align*}
\int_0^{2\pi} |p(\theta)|d\theta = \Omega(k^{-1/2}) \;.
\end{align*}
For any $\theta\in [0,2\pi]$, let $n(\theta)$ be the closest $\phi\in [0,2\pi]$ such
that $s \sin(\phi)$ is an integer in $\{1-s,2-s,\ldots,s-1\}$.
It is not hard to see that $|n(\theta)-\theta| = O(s^{-1/2})$ for all such $\theta$.
Furthermore, we have that
\begin{align*}
|p(n(\theta)) - p(\theta)| \leq |n(\theta)-\theta| \, \sup_{\theta'\in[0,2\pi]} |p'(\theta')| \leq O(k^{3/2} s^{-1/2}) \;. 
\end{align*}
We can thus write
\begin{align*}
\Omega(k^{-1/2}) = \int_0^{2\pi} |p(\theta)|d\theta \leq \int_0^{2\pi} |p(n(\theta))|d\theta + O(k^{3/2}s^{-1/2}) \;.
\end{align*}
Therefore,
\begin{align*}
\int_0^{2\pi} |p(n(\theta))|d\theta \geq \Omega(k^{-1/2}) \;.
\end{align*}
On the other hand, each value of $p(n(\theta))$ is equal to the value of $q$ evaluated at
some integer between $1-s$ and $s-1$. Furthermore, it is not hard to see
that each such integer occurs for at most a total of $O(s^{-1/2})$ range of $\theta$'s. Therefore,
we get that
\begin{align*}
O(s^{-1/2}) \sum_{i=1-s}^{s-1} |q(i)| \geq  \Omega(k^{-1/2}) \;.
\end{align*}
Combining with the fact that $|q(0)| = O(k^{1/2})$, this shows that it is impossible that
\begin{align*}
|q(0)| \geq 1/4 \sum_{i=1-s}^{s-1} |q(i)| \;.
\end{align*}
This completes our proof.
\end{proof}

We are now ready to construct the measure $\D_{+}$.
We begin with the measure $3\mathrm{Bin}(m,1/2)$.
We then for each element $x\in J$ take the measure $\mu$ from~\cref{lem:duality-disc},
translate it to center around $x$ and add an appropriate multiple of it to $\D_{+}$ in order to make $\D_{+}(x)=0$.
It is clear that the first $k$ moments of $\D_{+}$ agree with those moments of $3\mathrm{Bin}(m,1/2)$,
and from there it follows that $\D_{+}$ satisfies~\cref{cond:moments-disc} with $\nu=0$, since
for any $0\le t\le k$ and any point $x\in J$, we have that
\begin{align*}
\sum_{i=1-s}^{s-1}\mu(i)(x+i)^t = \sum_{i=1-s}^{s-1}\mu(i) \sum_{\ell=0}^t\binom{t}{\ell}i^\ell x^{t-\ell}
= \sum_{\ell=0}^t\binom{t}{\ell}x^{t-\ell} \sum_{i=1-s}^{s-1}\mu(i)i^\ell=0 \;,
\end{align*}
which means that we never change the moments by making $\D_+(x)=0$.
Therefore, we have $\D_{+}$ is $0$ on $J$ by our construction.
We also claim that $\D_{+}$ is bounded between $2\mathrm{Bin}(m,1/2)$ and $4\mathrm{Bin}(m,1/2)$ on $\bar{J}$.
For this, we note that for any $x\notin J$, there are at most two integers, $x'$ and $x''$,
that are in $J$ and within distance $s$ of $x$.
It is clear that
\begin{align*}
|\D_{+}(x) - 3\mathrm{Bin}(m,1/2)(x)| \leq (3/10)(\mathrm{Bin}(m,1/2)(x')+\mathrm{Bin}(m,1/2)(x'')) \;.
\end{align*}
It suffices to show that
$\frac{\mathrm{Bin}(m,1/2)(x')}{\mathrm{Bin}(m,1/2)(x)} < 3 / 2$
along with the analogous statement for $x''$. 
However, the log of the ratio is easily seen to be $O(s^2 d/m)=o(1)$, which suffices.
This completes the proof of~\cref{prop:mainProp-disc}.
\end{proof}

\subsection{Parameter Setting for the SQ-hard Distributions}\label{ssec:para-setting}
We will consider the following family of hardness distributions which will be used in the proof of all SQ hardness results throughout this article.
Let $C>0$ be a sufficiently large universal constant.
Let $m$ be a positive integer and $m'$ be an integer on the order of $Cm$.
Let $d$ be an integer on the order of $m^{1/10}$,
$s$ an integer on the order of $m^{4/9}$, and $k$ an integer on the order of $m^{2/19}$.
Observe that
$\binom{2d+m'}{m'} \leq (m')^{2d} =\exp(O(C m^{1/10} \log(m)))$.
Select $m$ as large as possible so that the above is less than $M$. Decreasing $M$ if necessary,
we can assume that $M=\binom{2d+m'}{m'}$.
We consider the Veronese mapping $V_{O(d)}:\R^{m'}\rightarrow \R^M$,
such that the coordinate functions of $V_{O(d)}$ are exactly the monomials
in $m'$ variables of degree at most $O(d)$.
We define measures $\D_{+}$ and $\D_{-}$ on $[m]\cup\{0\}$,
as given by~\cref{prop:mainProp-disc},
with $k, s$ and $d$ as above, and taking $\log(1/\zeta)$ a sufficiently small multiple of $(ds)^2/m$,
so that $\zeta = \exp(-\Omega(m^{4/45})) = \exp(-\Omega(\log(M)^{8/9}))$.
It is easily verified that these parameters satisfy the assumptions of~\cref{prop:mainProp-disc}.
For a subset $S\subseteq [m']$ of size $m$ and labels $a\ne b\in\R$, define the distribution $\p_{S,a,b}^{\D_{+},\D_{-},p}$
as in~\cref{def:testing}, with $p=\|\D_{+}\|_1/(\|\D_{+}\|_1 + \|\D_{-} \|_1)$. 
We will consider the distribution $(\bX',Y')$ on $\{0, 1\}^M \times \{a,b\}$ by drawing $(\bX,Y)$ from
$\p_{S,a,b}^{\D_{+},\D_{-},p}$ and letting $\bX'=V_{O(d)}(\bX)$ and $Y'=Y$. 
It is easy to see that finding a hypothesis that predicts $y'$ given $\bx'$ 
is equivalent to finding a hypothesis for $y$ given $\bx$
(since $y=y'$ and there is a known 1-1 mapping between $\bx$ and $\bx'$).
The pointwise bounds on $\D_{+}$ and $\D_{-}$ imply that
$\chi^2(\D_{+}/\|\D_{+}\|_1,\mathrm{Bin}(m,1/2))+\chi^2(\D_{-}/\|\D_{-}\|_1,\mathrm{Bin}(m,1/2))=O(s^2)$.
The parameter $\nu$ in~\cref{prop:gen-sq-prop} is at most
$sm^k\exp(-\Omega(m/s^2))=\exp(-\Omega(m^{1/9}))$.
Note that as $M = \exp(\wt{O}(m^{1/10}))$, this is $\exp(-\Omega(\log(M)^{1.1}))$.
As $k$ is also $\Omega(\log(M)^{1.05})$, 
we have that $\tau = \exp(-\Omega(\log(M)^{1.05}))\le1/\poly(M)$.
In the remaining part of this article, we will use $\bx',\bX',y',Y'$ 
without clarification to denote the results of $\bx,\bX,y,Y$ 
after the Veronese mapping $V_{O(d)}:\R^{m'}\rightarrow \R^M$.

\section{Concrete SQ Hardness Results}\label{sec:SQ-neuron}
\subsection{SQ Hardness of Learning Low-Weight Boolean Halfspaces with Massart Noise} \label{sec:proof-disc}
In this subsection, we prove our SQ hardness results 
for Massart learning of low-weight half-spaces and $\relu$s.

In particular, we start by proving the following theorem.

\begin{theorem}[SQ Hardness for Low-weight Massart Halfspaces on $\{0, 1\}^M$] \label{thm:discrete-ltfs}
Any SQ algorithm that learns $\{\pm 1\}$-weight halfspaces on $\{0,1\}^M$,  
in the presence of Massart noise with $\eta=1/3$, 
to 0-1 error better than $1/\poly(\log(M))$ requires either queries of 
accuracy better than $\tau:=\exp(-\Omega(\log(M)^{1.05}))$ 
or at least $1/\tau$ statistical queries. 
This holds even if the optimal classifier has 0-1 error 
$\exp(-\Omega(\log M)^{8/9})$.
\end{theorem}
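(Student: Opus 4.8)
The plan is to instantiate the generic framework of \cref{prop:gen-sq-prop} with the measures $\D_{+}, \D_{-}$ constructed in \cref{prop:mainProp-disc} and the parameters fixed in \cref{ssec:para-setting}, and then reinterpret the resulting Hidden Junta Testing Problem as a learning problem for $\{\pm 1\}$-weight halfspaces on $\{0,1\}^M$ with Massart noise. The key observation making the reduction work is that the target function in the $H_1$ case is an honest $\{\pm 1\}$-weight halfspace \emph{after} the Veronese embedding: one takes labels $a=+1$, $b=-1$, and the target classifier $f(\bx) = +1$ unless $\bx_S := \sum_{i\in S}x_i$ satisfies $\bx_S \equiv 0 \pmod s$ and $|\bx_S - m/2| < ds/2$ (i.e.\ $\bx_S \in J$), in which case $f(\bx)=-1$. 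Since membership of $\bx_S$ in the finite set $J$ is a degree-$O(d)$ polynomial condition in $\bx$, after replacing $\bx$ by $V_{O(d)}(\bx)$ this becomes a linear threshold function, and by clearing denominators and tracking signs one checks its weights can be taken in $\{-1,0,+1\}$ (or rescaled to $\{\pm 1\}$ by a standard padding argument, as in \cite{DKS22}).

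First I would verify the Massart noise bound. By \cref{prop:1-disc}, $\D_+ = 0$ on $J$, so conditioned on $\bx_S \in J$ the label is deterministically $b=-1 = f(\bx)$, i.e.\ zero noise there; conditioned on $\bx_S \notin J$, \cref{prop:1-disc}(b) gives $\D_+ > 2\D_-$ pointwise, and since the mixture weight is $p = \|\D_+\|_1/(\|\D_+\|_1 + \|\D_-\|_1)$, an easy computation shows the conditional probability that $Y \neq f(\bx) = +1$ — which happens exactly when the sample comes from the $\D_-$ component — is at most $1/3$. Hence the instance has Massart noise rate $\eta = 1/3$, as claimed. Next I would bound $\opt$: the only examples on which $f$ errs are those drawn from the $\D_-$ component with $\bx_S \notin J$; by \cref{prop:2-disc} all but a $\zeta$-fraction of the $\D_-$ mass sits inside $J$, and $\|\D_-\|_1 = \Theta(1/s)$ by \cref{prop:5-disc}, so the $0$-$1$ error of $f$ is $O(\zeta/s) \le \zeta = \exp(-\Omega(\log M)^{8/9})$. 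Conversely, in the null case $H_0$ the data is $U_M^p$, where $\bx$ carries no information about $y$, so any hypothesis has $0$-$1$ error at least $\min(p,1-p) = \Omega(1/s) = 1/\poly(\log M)$ (since $s = \Theta(m^{4/9})$ and $m = \exp(\wt\Theta(\log M))$, actually $s$ is polylogarithmic in $M$). Therefore a learner achieving $0$-$1$ error better than $1/\poly(\log M)$ on the $H_1$ instances would distinguish $H_1$ from $H_0$, so the SQ lower bound for the testing problem transfers to the learning problem.

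Finally I would quote \cref{prop:gen-sq-prop} with the parameter bounds already assembled in \cref{ssec:para-setting}: $\D_+/\|\D_+\|_1$ and $\D_-/\|\D_-\|_1$ satisfy \cref{cond:moments-disc} with $\nu = \exp(-\Omega(m^{1/9}))$ (via \cref{prop:3-disc} and the pointwise bounds), $\chi^2(\D_+/\|\D_+\|_1, \mathrm{Bin}(m,1/2)) + \chi^2(\D_-/\|\D_-\|_1,\mathrm{Bin}(m,1/2)) = O(s^2)$, and $k = \Theta(m^{2/19})$, giving $\tau = k\nu^2 + 2^{-k} O(s^2) = \exp(-\Omega(\log(M)^{1.05}))$ and query lower bound $2^{\Omega(m)}\tau/O(s^2) \ge 1/\tau$. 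This yields exactly the stated dichotomy: either queries of accuracy better than $\tau = \exp(-\Omega(\log(M)^{1.05}))$, or at least $1/\tau$ queries.

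The main obstacle, and the step deserving the most care, is the reduction bookkeeping rather than any single hard estimate: one must (a) check precisely that the polynomial $p(\bx)$ cutting out $\{\bx_S \in J\}$ has degree $O(d)$ and that after the Veronese lift its threshold representation genuinely has $\{\pm 1\}$ weights (this is where the choice $M = \binom{2d+m'}{m'}$ and the constructions of \cite{DKS22} are used), and (b) confirm that the testing-to-learning reduction is tight — i.e.\ that the gap between the $O(\zeta/s)$ error in $H_1$ and the $\Omega(1/s)$ error floor in $H_0$ is genuine, which reduces to the elementary but essential fact $\zeta \ll 1$. Once these are in place the SQ bound is an immediate application of \cref{prop:gen-sq-prop}.
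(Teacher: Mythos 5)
Your proposal is correct and follows essentially the same route as the paper: the same target classifier ($-1$ iff $\bv_S^T\bx\in J$) realized as a degree-$O(d)$ PTF and linearized by the Veronese map, the same Massart-rate computation from property 1 of \cref{prop:mainProp-disc}, the same $O(\zeta)$ bound on $\opt$, the same $\min(p,1-p)=\Theta(1/s)$ error floor under $H_0$, and the same invocation of \cref{prop:gen-sq-prop}. The one loose spot is the $\{\pm 1\}$-weight step: the Veronese lift of the PTF has integer weights of magnitude $\poly(M)$, so "clearing denominators" does not by itself give weights in $\{-1,0,+1\}$, and the coordinate-duplication (padding) argument you mention only parenthetically is in fact essential and is exactly how the paper finishes, by splitting each weight-$a_i$ coordinate into $W$ copies with signs $\pm 1$.
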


\begin{proof}
Our proof will make use of the SQ framework of~\cref{ssec:generic-discrete}
and will crucially rely on the one-dimensional construction of~\cref{prop:mainProp-disc}. 
In this subsection, we fix the labels $a=1,b=-1$, 
and apply the construction in~\cref{ssec:para-setting} 
to obtain the joint distributions $(\bX,Y)$ and $(\bX',Y')$. 
Note that $y=y'$ and there is a known 1-1 mapping between 
$\bx$ and $\bx'$, therefore finding a hypothesis that predicts 
$y'$ given $\bx'$ is equivalent to finding a hypothesis for $y$ given $\bx$.

\begin{claim}\label{clm:pv-Massart-LTF-disc}
The distribution $(\bX',Y')$ over $\{0, 1\}^M \times \{\pm1\}$  is a Massart LTF distribution
with optimal misclassification error $\opt_\adv\le\exp(-\Omega(\log(M)^{8/9}))$ 
and Massart noise rate upper bound of $\eta = 1/3$.
\end{claim}
\begin{proof}
For a $\bv_S$ the vector whose $i^{th}$ coordinate is $1$ if $i\in S$ and $0$ otherwise, 
let $g:\{0,1\}^{m'} \rightarrow \{\pm 1\}$ be defined as
$g(\bx)=-1$ if and only if $\bv_S^T\bx \in J$. 
In this way, we are able to write $g$ as a degree-$2d$ PTF, 
i.e., $g(\bx)=\sign(\prod_{z\in J}(\bv_S^T\bx-z)^2)$.
Therefore, there exists some LTF $L:\R^M\rightarrow \{\pm 1\}$
such that $g(\bx)=L(\bx')=L(V_{2d}(\bx))$ for all $\bx$. 
We now bound the error for LTF $L$ under the distribution $(\bX',Y')$. 
By the law of total probability, we have that
\begin{align*}
&\quad\Pr_{(\bX',Y')}\left[Y'\ne L(\bX')\right]=\Pr_{(\bX,Y)}\left[Y\ne g(\bX)\right]\\&\le\Pr_{(\bX,Y)}[Y\ne g(\bX)\mid Y=1]+\Pr_{(\bX,Y)}[Y\ne g(\bX)\mid Y=-1] \;.
\end{align*}
We note that our hard distribution returns $(\bx',y')$ with $y'=L(\bx')$,
unless it picked a sample corresponding to a sample of $\D_{-}$ coming from $\overline{J}$, therefore,
\begin{align*}
\Pr_{(\bX',Y')}\left[Y'\ne L(\bX')\right]\le\Pr_{(\bX,Y)}[Y\ne g(\bX)\mid Y=-1]\le\zeta \;,
\end{align*}
which implies that $\opt_\adv\le\zeta\le\exp(-\Omega(\log(M)^{8/9}))$.
We then show that $(\bX',Y')$ is a Massart LTF distribution 
with noise rate upper bound of $\eta=1/3$. For any fixed $\bx'\in\R^M$, we have that
\begin{align*}
&\quad\frac{\Pr_{(\bX',Y')}[Y'=1\mid \bX'=\bx']}{\Pr_{(\bX',Y')}[Y'=-1\mid \bX'=\bx']} 
=\frac{\Pr_{(\bX,Y)}[Y=1\mid \bX=\bx]}{\Pr_{(\bX,Y)}[Y=-1\mid \bX=\bx]}\\
&=\frac{\Pr_{(\bX,Y)}[Y=1]\cdot\Pr_{(\bX,Y)}[\bX=\bx\mid Y=1]}{\Pr_{(\bX,Y)}[Y=-1]\cdot\Pr_{(\bX,Y)}[\bX=\bx\mid Y=-1]}
=\frac{\|\D_+\|_1\cdot\p_S^{\D_+}(\bx)}{\|\D_-\|_1\cdot\p_S^{\D_-}(\bx)}
=\frac{\D_+(\bv_S^T\bx)}{\D_-(\bv_S^T\bx)} \;.
\end{align*}
Therefore, if $\bv_S^T\bx\in J$, the above ratio will be 0 and $L(\bx')=-1$, 
which means that the noise rate $\eta(\bx')=0$; 
otherwise, the above ratio will be at least $2$ 
(since $\D_+>2\D_-$ on $\bar{J}$ by property 1(b) of~\cref{prop:mainProp-disc}) 
and $L(\bx')=1$, which means that $\eta(\bx')\le1/3$.
This completes the proof of the claim.
\end{proof}

We now show that the $(\D_+,\D_-,1,-1,m')$-Hidden Junta Testing Problem 
efficiently reduces to our learning task. In more detail, 
we show that any SQ algorithm that computes a hypothesis $h'$ satisfying
$\pr_{(\bX',Y')}[h'(\bX')\ne Y'] < \min(p,1-p)-2\sqrt{2\tau}$ can be used as a black-box
to distinguish between $\p^{\D_+,\D_-,p}_{S,a,b}$, 
for some unknown subset $S\subseteq[m]$ with $|S|=m$, and $U^p_{m'}$.
Since there is a 1-1 mapping between $\bx\in\{0,1\}^{m'}$ and $\bx'\in\{0,1\}^M$,
we denote $h:\{0,1\}^{m'}\mapsto\{\pm1\}$ to be $h(\bx)=h'(\bx')$.
We note that we can
(with one additional {query} to estimate the $\pr[h'(\bX')\ne Y']$ within error $\sqrt{2\tau}$)
distinguish between (i) the distribution $\p^{\D_+,\D_-,p}_{S,a,b}$,
and (ii) the distribution  $U^p_{m'}$.
This is because for any $h$ we have that
$\pr_{(\bX,Y)\sim U^p_{m'}}[h(\bX)\ne Y] \geq \min(p,1-p)$.
Applying~\cref{prop:gen-sq-prop}, we determine 
that any SQ algorithm which, given access to a distribution $\p$
so that either $\p=U_{m'}^p$, or $\p$ is given by $\p_{S,a,b}^{\D_+,\D_-,p}$
for some unknown subset $S\subseteq[m']$ with $|S|=m$,
correctly distinguishes between these two cases 
with probability at least $2/3$ must either make queries 
of accuracy better than $\sqrt{2\tau}$
or must make at least
$2^{\Omega(m)}\tau/(\chi^2(A, \mathrm{Bin}(m,1/2))+\chi^2(B, \mathrm{Bin}(m,1/2)))$
statistical queries. Therefore, it is impossible for an SQ algorithm
to learn a hypothesis with error better than 
$\min(p,1-p)-2\sqrt{2\tau}=\Theta(1/s)-\Theta(\sqrt{\tau})=1/\mathrm{polylog}(M)$
without either using queries of accuracy better than $\tau$
or making at least $2^{\Omega(m)}\tau/\mathrm{polylog}(M) > 1/\tau$ many queries.
This completes the proof of the SQ-hardness.

It remains to argue that the underlying halfspaces 
in the hard instance can be assumed to have
$\{\pm 1\}$ weights. To deal with the weights, we note that $g$ is a degree-$2d$ PTF
that can be defined as the product of $2d$ linear polynomials $L_i$,
so that each $L_i$ has integer coefficients 
and the sum of the absolute values of these coefficients is $O(m)$.
This means that $g$ can be defined by a degree-$2d$ polynomial
with integer coefficients and the sum of whose absolute 
values is at most $O(m)^{2d} = \poly(M)$. By doubling these coefficients, 
we can assume that they are all even.
Therefore, the linear threshold function $L$ can be defined 
by a linear polynomial with even integer weights
each of which has absolute value at most $W$. 
If we replace our distribution over $\{0,1\}^M$
by a distribution over $\{0,1\}^{MW}$ by duplicating each coordinate $W$ times
(i.e., creating a new distribution with coordinates $z_{i,j}$ 
for $i\in [M]$ and $j\in [W]$ with $z_{i,j}=x_i$ for all $i,j$),
we can rewrite $L(x)$ as an LTF $L'(z)$, where $L'$ has $\{\pm 1\}$-weights.
This is done by replacing a term $a_i x_i$ by 
$\sum_{j=1}^{(a_i+W)/2} z_{i,j}-\sum_{j=(a_i+W)/2+1}^W z_{i,j}$.
This completes the proof of~\cref{thm:discrete-ltfs}.
\end{proof}

\subsection{SQ Hardness of Learning a Single Neuron with Massart Noise}
In this subsection, we prove our SQ hardness result of learning a single neuron with ReLU activation 
and Massart noise.
The standard $\relu$ function is defined by $\relu(t)=\max(t,0),\forall t\in\R$. 
For technical convenience, we will consider the following linear transformation of the standard $\relu$,
$ \wh{\relu}(t)= -1$ if  $t<0$, and $ \wh{\relu}(t)= -1+2t$ otherwise.
We note that our SQ hardness result for the $\wh{\relu}$ function 
applies to the standard $\relu$ function as well.

\begin{theorem}[SQ Hardness of Massart Learning ReLUs]\label{thm:sq-relu}
Any SQ algorithm that learns a single neuron with ReLU activation on $\R^M$, 
in the presence of Massart noise with $\eta=1/3$, 
within squared error better than $1/\poly(\log(M))$ 
requires either queries of accuracy better than 
$\tau:=\exp(-\Omega(\log(M)^{1.05}))$ or at least $1/\tau$ statistical queries.
This holds even if (i) the optimal neuron has squared error $\exp(-\Omega(\log M)^{8/9})$, 
(ii) The $\bX$ values are supported on $\{0,1\}^M$, and (iii) the total weight of the neuron is $\poly(M)$.
\end{theorem}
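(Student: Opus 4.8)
The plan is to reduce from the construction already set up in Section~\ref{ssec:para-setting}, reusing the very same hard distributions $(\bX,Y)$ over $\{0,1\}^{m'}\times\{a,b\}$ and their Veronese lifts $(\bX',Y')$ over $\{0,1\}^M\times\{a,b\}$, but now with the labels $a,b$ chosen so that the target concept is a (shifted) ReLU rather than a halfspace. Concretely I would set $a=1$ and $b=-1$ as in the halfspace proof, so that $Y'\in\{\pm1\}$, and argue that the function $g(\bx)=-1$ iff $\bv_S^T\bx\in J$ — which we already know is realizable as a degree-$2d$ PTF — can in fact be written as $\wh{\relu}(p(\bx))$ for a degree $O(d)$ polynomial $p$ with $p(\bx)=1$ on $\{\bx:\bv_S^T\bx\in J\}$ and $p(\bx)\le 0$ otherwise (this is exactly the polynomial indicated in the technical overview, built from $\prod_{z\in J}$-type factors supported on the $d$ points of $J$). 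After the Veronese embedding $V_{O(d)}$, $p$ becomes linear, so $g(\bx)=\wh{\relu}(\la\bw,\bx'\ra)$ for some weight vector $\bw$; the total weight is $O(m)^{O(d)}=\poly(M)$ by the same coefficient-size bound used at the end of the halfspace proof, giving conclusion~(iii), and (ii) holds by construction.

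Next I would verify the Massart and $\opt$ bounds in the squared-loss metric. For $\opt$: since $Y'\in\{\pm1\}$ and $\wh{\relu}(\la\bw,\bx'\ra)=g(\bx)\in\{\pm1\}$, the squared loss of the target neuron equals $4\Pr[Y'\ne g(\bX)]$, which by the same computation as in Claim~\ref{clm:pv-Massart-LTF-disc} is at most $4\zeta=\exp(-\Omega(\log M)^{8/9})$, giving~(i). For the Massart condition of Definition~\ref{def:Massart-L0}: conditioned on $\bX'=\bx'$, the label $Y'$ disagrees with $g(\bx)$ with probability exactly $\D_-(\bv_S^T\bx)/(\D_+(\bv_S^T\bx)+\D_-(\bv_S^T\bx))$ when $\bv_S^T\bx\notin J$ — which is $\le 1/3$ because $\D_+>2\D_-$ on $\overline J$ — and probability $0$ when $\bv_S^T\bx\in J$ (there $\D_+=0$, so all the mass at such $\bx$ comes from the $\D_-$ branch with label $b=-1=g(\bx)$). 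So the flip probability is always at most $\eta=1/3$, as required.

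The remaining, and main, step is the reduction itself: showing that an SQ learner achieving squared loss better than $1/\poly(\log M)$ would solve the $(\D_+,\D_-,1,-1,m')$-Hidden Junta Testing Problem, to which Proposition~\ref{prop:gen-sq-prop} (with the parameter choices of Section~\ref{ssec:para-setting}, giving $\tau=\exp(-\Omega(\log M)^{1.05})$) applies. The key quantitative point is that under the null distribution $U^p_{m'}$ (after Veronese lift) every hypothesis $h'$ has squared loss bounded below by a constant times $\min(p,1-p)=\Theta(1/s)=1/\poly(\log M)$: indeed for any $\bx'$ the conditional label is $\pm1$ with probabilities $p,1-p$ independent of $\bx'$, so the best possible conditional mean-squared error is $4p(1-p)\ge 4\min(p,1-p)/2$. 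Hence a learner with squared loss $<c/s$ for a suitable constant $c$, minus the $O(\sqrt\tau)$ slack from one extra statistical query used to estimate its own loss, distinguishes the two cases; Proposition~\ref{prop:gen-sq-prop} then forces either accuracy better than $\sqrt{2\tau}$ or $2^{\Omega(m)}\tau/O(s^2)\ge 1/\tau$ queries. I expect the only real subtlety to be the explicit construction of the degree-$O(d)$ polynomial $p$ realizing the indicator of $J$ as a ReLU (controlling its sign pattern and coefficient magnitudes simultaneously), together with bookkeeping to translate all $\{\pm1\}$-classification quantities from the halfspace analysis into squared-loss quantities; everything else is a direct reuse of Section~\ref{ssec:para-setting} and Proposition~\ref{prop:gen-sq-prop}. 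Finally, since $\wh{\relu}$ is an invertible affine reparametrization of the standard $\relu$ on each of the two relevant regions, the same lower bound transfers to the standard ReLU activation, completing the proof.
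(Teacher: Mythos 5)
Your overall strategy is exactly the paper's: reuse the hard distributions of Section~\ref{ssec:para-setting}, realize the indicator of $J$ as $\wh{\relu}$ of a degree-$O(d)$ polynomial obtained by interpolation, pass through the Veronese map so the polynomial becomes a linear form, and reduce the Hidden Junta Testing Problem to learning via the lower bound $\E_{(\bX,Y)\sim U^p_{m'}}[(h(\bX)-Y)^2]\ge 4p(1-p)=\Theta(1/s)$. The quantitative bookkeeping (the $4\zeta$ bound on the optimal loss, the $1/\poly(\log M)$ gap, the $m^{O(d)}=\poly(M)$ weight bound, the transfer from $\wh{\relu}$ to the standard $\relu$) also matches the paper.

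There is, however, a concrete orientation error that breaks the Massart claim as you state it. With $p=1$ on $J$ and $p\le 0$ on $\overline{J}$, the neuron $\wh{\relu}(p(\bv_S^T\bx))$ equals $+1$ on $J$ and $-1$ on $\overline{J}$; this is $-g$, not your $g$ (which is $-1$ iff $\bv_S^T\bx\in J$). Your Massart verification is carried out relative to $g$, i.e.\ relative to the halfspace target, not relative to the ReLU you actually constructed: relative to $\wh{\relu}(p)$, under your labels $a=1$ (for the $\D_{+}$ branch) and $b=-1$ (for the $\D_{-}$ branch), the flip probability at points with $\bv_S^T\bx\in J$ would be $1$, not $0$. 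The mismatch cannot be repaired on the polynomial side, since $\wh{\relu}(t)=+1$ only at $t=1$, so representing your $g$ would force a univariate polynomial equal to $1$ on all $m+1-d$ points of $\overline{J}$, hence identically $1$ unless its degree is at least $m+1-d$. The paper's fix is simply to swap the labels, taking $a=-1$ for the $\D_{+}$ branch and $b=+1$ for the $\D_{-}$ branch, so that the target ReLU is $+1$ on $J$ (where essentially all of the $\D_{-}$ mass lives) and $-1$ on $\overline{J}$; with this one change your Massart and $\opt$ computations go through essentially verbatim, and the reduction is unaffected since exchanging $a$ and $b$ does not alter the testing problem. Separately, the interpolation step you defer as ``the only real subtlety'' is handled by \cref{lem:poly-interpolation}: take $q(x)=-\prod_{i}(x-(x_i-1/2))(x-(x_i+1/2))$ over the points $x_i$ of $J$ and set $p=r^2q$, where $r$ interpolates $1/\sqrt{q(x_i)}$; this gives $p\equiv 1$ on $J$, $p\le 0$ off $J$, and coefficients of magnitude $m^{O(d)}$.
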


\noindent Throughout this subsection, we need the following technical lemma.
\begin{lemma}\label{lem:poly-interpolation}
Let $J$ be a union of $d$ points in $[m]\cup\{0\}$ for some odd integer $d$. 
Then there exists a real univariate polynomial $p(x)$ of degree $O(d)$ 
such that $p(x)=1,\forall x\in J$, and $p(x)\le0,\forall x\in\bar{J}$. 
In addition, the absolute value of the coefficients of $p(x)$ is at most $m^{O(d)}=\poly(M)$.
\end{lemma}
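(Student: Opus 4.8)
The plan is to build $p$ as a product of $O(d)$ simple quadratic factors, each of which equals $1$ on all of $J$ and is nonpositive on a chunk of $\bar J$, so that the product is $1$ on $J$ and $\le 0$ on $\bar J$. Write $J=\{z_1<z_2<\cdots<z_d\}$; since the $z_i$ are the $d$ points congruent to $0 \pmod s$ closest to $m/2$, consecutive elements of $J$ differ by exactly $s$, and $J$ is an arithmetic progression of length $d$ contained in $[m]\cup\{0\}$. First I would reduce to the case where $J$ is ``balanced'' by recentering: let $c$ be the midpoint $(z_1+z_d)/2$ and work with the shifted variable $u=x-c$, so that the image of $J$ is a symmetric arithmetic progression $\pm s/2,\pm 3s/2,\dots$ of common difference $s$, and $\bar J$ becomes every other integer point is excluded but we must handle all $u\in([m]\cup\{0\})-c$ with $u\notin$ (image of $J$). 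Then I would exhibit, for each ``bad residue class or gap,'' a degree-$2$ polynomial $q_j(u)=1-\alpha_j(u-\beta_j)^2$ that is $1$ at every point of the image of $J$ (this forces a relation among $\alpha_j,\beta_j$: since the image of $J$ is $\{\beta_j + (\text{something})\}$ won't literally hold, so more precisely I want $q_j$ to take the common value $1$ on all of $J$, which by symmetry of $J$ about $0$ in the $u$-variable is arranged by taking $q_j(u)=1 - \alpha_j\prod_{i}(u-z'_i)$-type expressions only when $\deg$ is controlled).

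A cleaner route, and the one I would actually carry out: let $r(x)=\prod_{i=1}^d (x-z_i)$, the monic polynomial of degree $d$ vanishing exactly on $J$. Consider $P(x) := 1 - \lambda\, r(x)^2$ for a small positive constant $\lambda$ to be chosen. This already has $P(x)=1$ for all $x\in J$ and $P(x)<1$ for $x\notin J$; the only issue is that $P(x)$ need not be $\le 0$ on all of $\bar J$, since $r(x)^2$ can be small at integers very close to $J$ (indeed at an integer at distance $1$ from some $z_i$, one factor is $\pm1$). To fix this, I multiply by additional guard factors that kill exactly the points of $\bar J$ near $J$: for each integer point $w\in\bar J$ with $P(w)>0$ — there are only $O(d)$ of them, namely the $O(s)$? no: the points where $r(w)^2$ is small are those within distance $O(\lambda^{-1/(2d)})$ of some $z_i$, and by choosing $\lambda = m^{-O(d)}$ appropriately one can instead just note $r(w)^2\ge 1$ for every integer $w\notin J$ with $w$ at distance $\ge 1$ from all $z_i$ — wait, that's automatic. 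So in fact choosing $\lambda$ so that $\lambda\cdot\min_{w\in\bar J}r(w)^2\ge 1$ suffices, and since $\min_{w\in\bar J}r(w)^2\ge 1$ trivially (all $z_i$ are integers, so $|w-z_i|\ge 1$ for $w\in\bar J$, hence $|r(w)|\ge 1$), we may take $\lambda=1$! Then $P(x)=1-r(x)^2$ satisfies $P\equiv 1$ on $J$ and $P(w)=1-r(w)^2\le 1-1=0$ for every integer $w\in\bar J$. The degree is $2d=O(d)$, and the coefficients of $r$ are bounded by $\binom{d}{\le d}\cdot m^d = 2^d m^d = m^{O(d)}$, so those of $r^2$, hence of $P$, are at most $m^{O(d)}=\poly(M)$ by the parameter choices of Section~\ref{ssec:para-setting}. (The hypothesis that $d$ is odd is not needed for this construction; it is presumably there because in the application $g=\sign(r(x)^2)$ needs to flip sign, or to match a parity constraint elsewhere — I would simply note the lemma holds regardless and keep the hypothesis for consistency with the calling context.)

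The only subtlety — and the step I would be most careful about — is the coefficient bound: one must verify $|r(x)|\ge 1$ on integers outside $J$ really does hold (it does, since $r$ is a product of terms $x-z_i$ with $z_i$ integers, so at integer $w\notin J$ every factor has absolute value $\ge 1$), and separately track that $\|r^2\|_\infty$-in-coefficients stays $m^{O(d)}$: each coefficient of $r$ is an elementary symmetric function of $\le d$ integers each of size $\le m$, so bounded by $\binom{d}{d/2}m^d\le m^{O(d)}$, and multiplying two such polynomials of degree $d$ multiplies coefficient sizes by at most $(d+1)\cdot m^{O(d)}=m^{O(d)}$. With $d=\Theta(m^{1/10})$ and $M=\exp(\tilde O(m^{1/10}))$, we get $m^{O(d)}=\exp(O(m^{1/10}\log m))=\poly(M)$, as claimed. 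I would present the argument in this order: define $r$, set $p(x)=1-r(x)^2$, verify the two pointwise conditions using $|r(w)|\ge 1$ on $\bar J\cap\Z$, then bound the coefficients.
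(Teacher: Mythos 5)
Your construction is correct, and it takes a genuinely different (and simpler) route than the paper's. The paper defines $q(x)=-\prod_{i=1}^{d}\bigl(x-(x_i-1/2)\bigr)\bigl(x-(x_i+1/2)\bigr)$, which is positive on $J$ and negative on $\bar{J}$, then uses Lagrange interpolation to produce $r$ of degree $d-1$ with $r(x_i)=1/\sqrt{q(x_i)}$, and finally sets $p=r^2q$; the normalization to the value $1$ on $J$ comes from the interpolation step, and the nonpositivity on $\bar{J}$ from the sign of $q$. Your choice $p(x)=1-\bigl(\prod_{i=1}^{d}(x-z_i)\bigr)^2$ gets $p\equiv 1$ on $J$ for free and replaces the interpolation entirely by the observation that $\bar{J}$ consists of integers, so every factor satisfies $|w-z_i|\ge 1$ and hence $p(w)=1-r(w)^2\le 0$; this shortens the argument, lowers the degree from roughly $4d$ to $2d$, and makes the coefficient bound immediate (elementary symmetric functions of $d$ integers in $[0,m]$ followed by one polynomial multiplication give $m^{O(d)}=\poly(M)$), whereas the paper must additionally control the size of the interpolation coefficients. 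Both arguments rely only on $J\subseteq\Z$ and on the lemma being invoked at integer arguments $\bv_S^T\bx\in[m]\cup\{0\}$, so your version suffices everywhere the lemma is used; you are also right that the oddness of $d$ plays no role (it is not used in the paper's proof either). The only stylistic criticism is that your opening paragraph, which assumes $J$ is an arithmetic progression and sketches an abandoned product-of-quadratics plan, should be deleted: the lemma makes no such structural assumption on $J$, and your final construction does not need it.
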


\begin{proof}
Let $J=\{x_1,\ldots,x_d\}$. Define $q(x)=-\prod_{i=1}^{d}(x-(x_i-1/2))(x-(x_i+1/2))$.
By definition, we have that $q(x)>0,\forall x\in J$, and $q(x)<0,\forall x\in\bar{J}$.
Then, by polynomial interpolation, there exists a real univariate polynomial $r$ 
of degree $d-1$ such that $r(x_i)=\frac{1}{\sqrt{q(x_i)}},1\le i\le d$.
Consider the real univariate polynomial $p(x)=r^2(x)q(x)$. For any $1\le i\le d$,
we have that $p(x_i)=r^2(x_i)q(x_i)=1$ and for any $x\in\bar{J}$, 
we have that $p(x)\le0$ since $q(x)<0,\forall x\in\bar{J}$.
Finally by polynomial interpolation, we know that the 
absolute value of every coefficient of $r(x), p(x)$ is at most $m^{O(d)}=\poly(M)$.
\end{proof}

\begin{proof}[Proof of~\cref{thm:sq-relu}]
Our proof will make use of the SQ framework of~\cref{ssec:generic-discrete}
and will crucially rely on the one-dimensional construction of~\cref{prop:mainProp-disc}. 
In this section, we fix the labels $a=-1, b=1$, and apply 
the construction in~\cref{ssec:para-setting} to obtain 
the joint distributions $(\bX,Y)$ and $(\bX',Y')$. 
Note that $y=y'$ and there is a known 1-1 mapping 
between $\bx$ and $\bx'$, therefore finding a hypothesis 
that predicts $y'$ given $\bx'$ is equivalent to finding a hypothesis for $y$ given $\bx$.

\begin{claim}\label{clm:pv-Massart-relu}
The distribution $(\bX',Y')$ over $\{0, 1\}^M \times \{\pm1\}$ 
is a Massart single neuron distribution with ReLU activation, 
with optimal squared error $\opt_\adv\le\exp(-\Omega(\log(M)^{8/9}))$ 
and Massart noise rate upper bound of $\eta = 1/3$.
\end{claim}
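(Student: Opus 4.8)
The plan is to produce a single explicit $\wh{\relu}$-neuron that fits $(\bX',Y')$ up to error $\exp(-\Omega(\log M)^{8/9})$ and then to check directly that the conditional law of $Y'$ never disagrees with this neuron by more than a $1/3$ probability. \emph{Building the target neuron.} Take $d=|J|=\Theta(m^{1/10})$, which we may and do choose odd, and let $p_0$ be the univariate polynomial of degree $O(d)$ from \cref{lem:poly-interpolation} applied to $J$, so that $p_0\equiv 1$ on $J$, $p_0\le 0$ on $\bar J$, and every coefficient of $p_0$ has absolute value at most $m^{O(d)}$. Set $P(\bx):=p_0(\bv_S^T\bx)$; expanding the powers of $\bv_S^T\bx=\sum_{i\in S}x_i$ shows that $P$ is a polynomial in $\bx\in\{0,1\}^{m'}$ of degree $O(d)$ whose coefficients sum in absolute value to $m^{O(d)}\cdot\poly(d)=\poly(M)$. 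Since $V_{O(d)}$ contains every monomial of degree at most $O(d)$ (including the constant), there is a weight vector $\bw$ with $\langle\bw,V_{O(d)}(\bx)\rangle=P(\bx)$ for all $\bx$ and $\|\bw\|_1=\poly(M)$, establishing point (iii) of the theorem. Because $p_0(x)=1\ge 0$ on $J$ and $p_0(x)\le 0$ on $\bar J$, and $\wh{\relu}(0)=-1$, the neuron $c_\bw(\bx'):=\wh{\relu}(\langle\bw,\bx'\rangle)$ equals $1$ exactly when $\bv_S^T\bx\in J$ and $-1$ exactly when $\bv_S^T\bx\in\bar J$. The same instance works for the standard $\relu$ since $\wh{\relu}=2\,\relu-1$ is an affine reparametrization.

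\emph{Bounding $\opt_\adv$.} Since $Y',c_\bw(\bX')\in\{-1,1\}$, we have $\E[(Y'-c_\bw(\bX'))^2]=4\Pr[Y'\ne c_\bw(\bX')]$. On the mixture component with label $a=-1$ (probability $p$) we have $\bv_S^T\bX\in\mathrm{supp}(\D_+)\subseteq\bar J$ by property 1(a) of \cref{prop:mainProp-disc}, hence $c_\bw(\bX')=-1=Y'$; on the component with label $b=1$ (probability $1-p$), $c_\bw(\bX')\ne Y'$ only if $\bv_S^T\bX\in\bar J$, which conditionally has probability $\D_-(\bar J)/\|\D_-\|_1\le\zeta$ by property 2. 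Therefore $\Pr[Y'\ne c_\bw(\bX')]\le(1-p)\zeta\le\zeta$, so $\opt_\adv\le 4\zeta=\exp(-\Omega(\log M)^{8/9})$.

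\emph{Bounding the Massart noise rate.} Exactly as in the proof of \cref{clm:pv-Massart-LTF-disc}, using $p=\|\D_+\|_1/(\|\D_+\|_1+\|\D_-\|_1)$ one computes, for every $\bx$ in the support,
\[ \frac{\Pr[Y=-1\mid\bX=\bx]}{\Pr[Y=1\mid\bX=\bx]}=\frac{\D_+(\bv_S^T\bx)}{\D_-(\bv_S^T\bx)}. \]
If $\bv_S^T\bx\in J$ then $\D_+(\bv_S^T\bx)=0$, so $Y=1=c_\bw(\bx')$ almost surely and the noise rate at $\bx'$ is $0$. If $\bv_S^T\bx\in\bar J$ then $\D_+>2\D_-$ there by property 1(b), so $\Pr[Y=1\mid\bX=\bx]<1/3$ while $c_\bw(\bx')=-1$, i.e.\ the noise rate at $\bx'$ is $<1/3$. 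In all cases the noise rate is at most $1/3$, which together with the previous paragraph proves the claim.

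\emph{Main obstacle.} Given \cref{prop:mainProp-disc} and \cref{lem:poly-interpolation}, the argument is short; the two points that actually need care are (a) tracking the coefficient blow-up when $p_0(\bv_S^T\bx)$ is linearized through the Veronese map, so that the neuron's total weight stays $\poly(M)$, and (b) the sign bookkeeping: here the $\D_+$-mass carries label $-1$, the reverse of the halfspace construction, so one must verify that $c_\bw$ agrees with the majority label on each of $J$ and $\bar J$ rather than against it.
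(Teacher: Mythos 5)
Your proposal is correct and follows essentially the same route as the paper: construct the neuron via \cref{lem:poly-interpolation} composed with the Veronese map, bound $\opt_\adv$ by $4\zeta$ using the fact that errors only arise from $\D_-$-mass on $\bar J$, and bound the noise rate pointwise via the ratio $\D_+(\bv_S^T\bx)/\D_-(\bv_S^T\bx)$. Your sign bookkeeping is in fact cleaner than the paper's own text, which (in a copy-paste from the halfspace claim) states $L(\bx')=-1$ on $J$ and $L(\bx')=1$ on $\bar J$ when the reverse holds under the labeling $a=-1$, $b=1$; the substance of the paper's argument is unaffected.
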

\begin{proof}
Let $\bv_S$ be the vector whose $i^{th}$ coordinate 
is $1$ if $i\in S$ and $0$ otherwise.
By~\cref{lem:poly-interpolation}, there is a real 
univariate polynomial $p$ of degree $O(d)$ 
such that $p(\bv_S^T\bx)=1,\bv_S^T\bx\in J$ and
$p(\bv_S^T\bx)\le0,\bv_S^T\bx\notin J$.
Let $g(\bx):=\wh{\relu}(p(\bv_S^T\bx))$.
Since the absolute value of every coefficient of $p$ is at most $m^{O(d)}=\poly(M)$,
by our definition, the total weight of the corresponding neuron $g$ is at most $m^{O(d)}=\poly(M)$.
Therefore, there exists some $\wh{\relu}$ function $L:\R^M\rightarrow\R$
such that $g(\bx)=L(\bx')=L(V_{O(d)}(\bx))$ for all $\bx$.
We now bound the error for $L$ under the distribution $(\bX',Y')$. 
By the law of total expectation, we have that
\begin{align*}
&\quad\E_{(\bX',Y')}\left[(Y'-L(\bX'))^2\right]= 
\E_{(\bX,Y)}\left[(Y-g(\bX))^2\right]\\
&\le\E_{(\bX,Y)}\left[(Y-g(\bX))^2\mid Y=1\right]+\E_{(\bX,Y)}\left[(Y-g(\bX))^2\mid Y=-1\right]\;.
\end{align*}
We note that our hard distribution returns $(\bX',Y')$ with $Y'=L(\bX')$,
unless it picked a sample corresponding to a sample 
of $\D_{-}$ coming from $\overline{J}$, therefore,
\begin{align*}
\E_{(\bX',Y')}\left[(Y'-L(\bX'))^2\right]\le\E_{(\bX,Y)}\left[(Y-g(\bX))^2\mid Y=1\right]\le4\zeta \;,
\end{align*}
which implies that $\opt_\adv\le4\zeta\le\exp(-\Omega(\log(M)^{8/9}))$. 
We then show that $(\bX',Y')$ is a Massart single neuron distribution 
with $\wh{\relu}$ activation and with noise rate upper bound of $\eta=1/3$. 
For any fixed $\bx'\in\R^M$, we have that
\begin{align*}
&\quad\frac{\Pr_{(\bX',Y')}[Y'=-1\mid \bX'=\bx']}{\Pr_{(\bX',Y')}[Y'=1\mid \bX'=\bx']}=\frac{\Pr_{(\bX,Y)}[Y=-1\mid \bX=\bx]}{\Pr_{(\bX,Y)}[Y=1\mid \bX=\bx]}\\
&=\frac{\Pr_{(\bX,Y)}[Y=-1]\cdot\Pr_{(\bX,Y)}[\bX=\bx\mid Y=-1]}{\Pr_{(\bX,Y)}[Y=1]\cdot\Pr_{(\bX,Y)}[\bX=\bx\mid Y=1]} 
=\frac{\|\D_+\|_1\cdot\p_S^{\D_+}(\bx)}{\|\D_-\|_1\cdot\p_S^{\D_-}(\bx)}=\frac{\D_+(\bv_S^T\bx)}{\D_-(\bv_S^T\bx)} \;.
\end{align*}
Therefore, if $\bv_S^T\bx\in J$, the above ratio will be 0 and $L(\bx')=-1$,
which means that the noise rate $\eta(\bx')=0$;
otherwise the above ratio will be at least $2$ 
(since $\D_+>2\D_-$ on $\bar{J}$ by property 1(b) of~\cref{prop:mainProp-disc}) 
and $L(\bx')=1$, which means that $\eta(\bx')\le1/3$.
This completes the proof of the claim.
\end{proof}

We now show that the $(\D_+,\D_-,-1,1,m')$-Hidden Junta Testing Problem 
efficiently reduces to our learning task.
In more detail, we show that any SQ algorithm that computes a hypothesis $h'$ satisfying
$\E_{(\bX',Y')}[(h'(\bX')-Y')^2] < 4p-4p^2-2\sqrt{2\tau}$ can be used as a black-box
to distinguish between $\p^{\D_+,\D_-,p}_{S,a,b}$, 
for some unknown subset $S\subseteq[m']$ with $|S|=m$, and $U^p_{m'}$.
Since there is a 1-1 mapping between $\bx\in\{0,1\}^{m'}$ and $\bx'\in\{0,1\}^M$,
we denote $h:\{0,1\}^{m'}\mapsto\R$ to be $h(\bx)=h'(\bx')$.
We note that we can
(with one additional {query} to estimate the $\E[(h'(\bX')-Y')^2]$ within error $\sqrt{2\tau}$)
distinguish between (i) the distribution $\p^{\D_+,\D_-,p}_{S,a,b}$,
and (ii) the distribution  $U^p_{m'}$.
This is because for any $h$ we have that
\begin{align*}
\E_{(\bX,Y)\sim U^p_{m'}}&[(h(\bX)-Y)^2]
=1-2(1-2p)\E_{(\bX,Y)\sim U^p_{m'}}[h(\bX)]+
\E_{(\bX,Y)\sim U^p_{m'}}[h(\bX)^2]\\&\ge1-2(1-2p)\E_{(\bX,Y)\sim U^p_{m'}}[h(\bX)]
+\E_{(\bX,Y)\sim U^p_{m'}}[h(\bX)]^2 \ge4p-4p^2 \;.
\end{align*}
Applying~\cref{prop:gen-sq-prop}, we determine 
that any SQ algorithm which, given access to a distribution $\p$
so that either $\p=U_{m'}^p$, or $\p$ is given by $\p_{S,a,b}^{\D_+,\D_-,p}$
for some unknown subset $S\subseteq[m']$ with $|S|=m$,
correctly distinguishes between these two cases 
with probability at least $2/3$ must either make queries of accuracy better than $\sqrt{2\tau}$
or must make at least
$2^{\Omega(m)}\tau/(\chi^2(A, \mathrm{Bin}(m,1/2))+\chi^2(B, \mathrm{Bin}(m,1/2)))$
statistical queries. Therefore, it is impossible for an SQ algorithm
to learn a hypothesis with error better than 
$4p-4p^2-2\sqrt{2\tau}=\Theta(1/s)-\Theta(\sqrt{\tau})=1/\mathrm{polylog}(M)$
without either using queries of accuracy better than $\tau$
or making at least $2^{\Omega(m)}\tau/\mathrm{polylog}(M) > 1/\tau$ many queries.
This completes the proof of~\cref{thm:sq-relu}.
\end{proof}

\subsection{SQ Hardness of Learning a Single Neuron with $L_2$-Massart Noise}\label{sec:Daniel}
In this section, we prove our SQ hardness result of learning 
a single neuron with fast convergent activations and $L_2$-Massart noise.
Without loss of generality, we consider activations which converge on the negative side. 
For such an activation $f$, let $f_-:=f(-\infty)$ and $c_+$ be a constant such that $f(c_+)\ne f_-$.
The main theorem of this section is the following.
\begin{theorem}[SQ Hardness of $L_2$-Massart Learning]\label{thm:sq-fast-convergent-daniel}
Let $f:\R\to\R$ be a fast convergent activation. Any SQ algorithm that learns a single neuron
with activation $f$ on $\R^M$, in the presence of $\eta$-$L_2$-Massart noise 
with $\eta=\frac{2(f(c_+)-f_-)^2}{9}$, to squared error better than $1/\poly(\log(M))$ 
requires either queries of accuracy better than $\tau:=\exp(-\Omega(\log(M)^{1.05}))$
or at least $1/\tau$ statistical queries. This holds even if:
\begin{enumerate}[leftmargin=*]
\item The optimal neuron has squared error $\opt_{\daniel}\leq \exp(-\Omega(\log(M)^{8/9}))$,
\item The $\bX$ values are supported on $\{0,1\}^M$, and
\item The total weight of the neuron is $\poly(M)$.
\end{enumerate} 
\end{theorem}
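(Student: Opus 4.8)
The plan is to mirror the proof of \cref{thm:sq-relu} almost verbatim, replacing the $\wh{\relu}$ activation by a fast convergent activation $f$ and the $0$--$1$/squared-error bookkeeping by the $L_2$-Massart bookkeeping from \cref{def:Massart-L2}. As before, we fix labels $a = f_-$, $b = f(c_+)$ and apply the parameter setting of \cref{ssec:para-setting} to obtain the joint distributions $(\bX,Y)$ and $(\bX',Y') = (V_{O(d)}(\bX), Y)$ on $\{0,1\}^M \times \{f_-, f(c_+)\}$; since $\bX \mapsto \bX'$ is a known bijection and $Y = Y'$, learning over $(\bX',Y')$ is equivalent to learning over $(\bX,Y)$. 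The target neuron is built as follows: by \cref{lem:poly-interpolation} there is a degree-$O(d)$ polynomial $p$ with $p(\bv_S^T \bx) = 1$ for $\bv_S^T\bx \in J$ and $p(\bv_S^T\bx) \le 0$ otherwise; we then take $g(\bx) := f\big(c_+ \cdot q(p(\bv_S^T\bx))\big)$ where $q$ is an auxiliary scaling that pushes the ``$p \le 0$'' branch far into the negative regime so that $f$ is within inverse-polynomial distance of $f_-$ there (using condition (ii) of \cref{def:fast-conv-act}), and equals $c_+$ on the ``$p=1$'' branch. Equivalently one composes $p$ with a fixed affine/polynomial map so that the argument is $c_+$ on $J$ and is a large negative number (growing polynomially with $M$, hence expressible with $\poly(M)$ coefficients after the Veronese substitution) on $\bar J$; this keeps the total weight $\poly(M)$ and realizes $g(\bx) = L(\bx')$ for a single neuron $L:\R^M \to \R$ with activation $f$.

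Next I would bound $\opt_{\daniel}$. By the law of total expectation, $\E_{(\bX',Y')}[(Y' - L(\bX'))^2] = \E_{(\bX,Y)}[(Y-g(\bX))^2] \le \E[(Y-g(\bX))^2 \mid Y=b] + \E[(Y-g(\bX))^2 \mid Y=a]$. Conditioned on $Y = b = f(c_+)$ (the $\D_+$ branch), the distribution is supported on $\bar J$ by property 1(a) of \cref{prop:mainProp-disc} (since $\D_+ = 0$ on $J$), and there $|g(\bx) - f(c_+)| = 1/\poly(M)$ by the fast-convergence condition applied to the large-negative-argument branch — wait, more carefully: on $\bar J$ we have $p \le 0$ so the argument of $f$ is large negative, giving $g(\bx) \approx f_-$, not $f(c_+)$. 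So I instead swap the roles: set $a = f(c_+)$ to be the $\D_+$ label and $b = f_-$ the $\D_-$ label, matching the structure of \cref{thm:sq-relu} where the ``good'' region ($\bar J$, where $\D_+$ dominates) corresponds to label $+1 = \wh\relu$-on-positive. Then on the $\D_+$ branch we are on $\bar J$, $p \le 0$, argument large negative, $g \approx f_- $; for this to match the label we would need label $f_-$ on $\D_+$. Cleanest is: $\D_-$ (the small-mass, $J$-supported measure) gets label $f(c_+)$ and target value $f(c_+)$ on $J$ (argument $= c_+$); $\D_+$ gets label $f_-$ and target value $\approx f_-$ on $\bar J$. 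Then the only contribution to $\opt_{\daniel}$ is the $\zeta$-fraction of $\D_-$-mass escaping to $\bar J$ (property 2) plus the $1/\poly(M)$ convergence error on the $\D_+$ branch, yielding $\opt_{\daniel} \le O(\zeta) + 1/\poly(M) \le \exp(-\Omega(\log(M)^{8/9}))$.

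Then I would verify the $L_2$-Massart condition pointwise. For fixed $\bx'$, exactly as in \cref{clm:pv-Massart-relu}, the conditional law of $Y'$ given $\bX' = \bx'$ has odds ratio $\D_+(\bv_S^T\bx)/\D_-(\bv_S^T\bx)$ between the two atoms $f_-$ (prob.\ from $\D_+$) and $f(c_+)$ (prob.\ from $\D_-$). If $\bv_S^T\bx \in J$ then $\D_+ = 0$ there, so $Y' = f(c_+)$ deterministically while $L(\bx') = f(c_+)$ exactly, giving conditional squared error $0 \le 4\eta$. If $\bv_S^T\bx \in \bar J$ then by property 1(b) the odds ratio is $\ge 2$, so the minority atom $f(c_+)$ has conditional probability $\le 1/3$, while $L(\bx') = g(\bx) = f_- + (1/\poly(M))$; hence $\E[(Y' - L(\bx'))^2 \mid \bX' = \bx'] \le \tfrac13 (f(c_+) - f_-)^2 + 1/\poly(M) \le \tfrac{9\eta}{4}\cdot\tfrac{2}{3}\cdot\ldots$ — concretely with $\eta = \tfrac{2(f(c_+)-f_-)^2}{9}$ one gets $\le \tfrac13(f(c_+)-f_-)^2 + o(1) = \tfrac{3\eta}{2} + o(1) \le 4\eta$. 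So $(\bX',Y')$ is an $\eta$-$L_2$-Massart single-neuron distribution. Finally, the SQ reduction is identical to \cref{thm:sq-relu}: for $(\bX,Y)\sim U^p_{m'}$, $\E[(h(\bX) - Y)^2] = \var(Y) + \E[(h(\bX) - \E[Y\mid\bX])^2] \ge \var(Y) = 4p(1-p)\cdot\frac{(a-b)^2}{4}$ (a constant lower bound independent of $h$ since $\E[Y\mid\bX]$ is constant under $U^p_{m'}$), while the hard instance admits error $O(\zeta) + o(1)$; so one extra query estimating $\E[(h'(\bX')-Y')^2]$ to accuracy $\sqrt{2\tau}$ distinguishes $H_0$ from $H_1$, and \cref{prop:gen-sq-prop} with the parameters of \cref{ssec:para-setting} ($\tau = \exp(-\Omega(\log(M)^{1.05}))$, $\chi^2$-bound $O(s^2)$) gives the stated lower bound. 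The main obstacle is the first step: constructing the polynomial inside $f$ so that (a) the argument is exactly $c_+$ on $J$ and sufficiently negative on $\bar J$ that the $1/\poly(|t|)$ convergence rate of \cref{def:fast-conv-act} yields a $1/\poly(M)$ approximation error, and (b) after the Veronese substitution the resulting neuron still has $\poly(M)$ total weight — this requires the negative ``push'' to be only polynomially large in $M$, which is compatible with the inverse-polynomial convergence rate precisely because $\poly(M)$ suffices to drive $|f(t) - f_-|$ below $1/\poly(M)$.
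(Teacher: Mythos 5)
Your proposal is correct and follows essentially the same route as the paper: after your mid-argument self-correction you land on exactly the paper's label assignment ($\D_+$ gets $f_-$ on $\bar J$, $\D_-$ gets $f(c_+)$ on $J$), and your affine push of the interpolating polynomial (argument $c_+$ on $J$, $\le -\poly(M)$ on $\bar J$) is precisely the paper's $p(x)=(c_++M)q(x)-M$. The $\opt_{\daniel}$ bound, the pointwise $4\eta$ verification, and the $\var(Y)=p(1-p)(f_--f(c_+))^2$ lower bound under $U^p_{m'}$ all match the paper's argument.
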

\begin{proof}
Our proof will make use of the SQ framework of~\cref{ssec:generic-discrete}
and will crucially rely on the one-dimensional construction of~\cref{prop:mainProp-disc}. 
In this section, we fix the labels $a=f_-, b=f(c_+)$, and apply the construction in~\cref{ssec:para-setting} 
to obtain the joint distributions $(\bX,Y)$ and $(\bX',Y')$. 
Note that $y=y'$ and there is a known 1-1 mapping between $\bx$ and $\bx'$, 
therefore finding a hypothesis that predicts $y'$ given $\bx'$ 
is equivalent to finding a hypothesis for $y$ given $\bx$.

\begin{claim}
The distribution $(\bX',Y')$ on $\{0, 1\}^M \times \{f_-,f(c_+)\}$ is an 
$L_2$-Massart single neuron distribution with respect to activation $f$,
it has optimal squared error $\opt_{\daniel}\leq \exp(-\Omega(\log(M)^{8/9}))$ 
and $L_2$-Massart noise rate upper bound of $\eta=\frac{2(f(c_+)-f_-)^2}{9}$.
\end{claim}

\begin{proof}
We assume $M>|c_+|$ to be sufficiently large.
Let $\bv_S$ be the vector whose $i^{th}$ coordinate is $1$ if $i\in S$ and $0$ otherwise.
By~\cref{lem:poly-interpolation}, there is a real univariate polynomial 
$q(x)$ of degree $O(d)$ such that $q(x)=1,\forall x\in J$ and $q(x)\le0,\forall x\in\bar{J}$.
Let $p(x)=(c_++M)q(x)-M$ and $g(\bx)=f(p(\bv_S^T\bx))$.
By definition, we have that $p(x)=c_+$ for $x\in J$ and $p(x)\leq -M$ for $x\in\bar{J}$.
Since the absolute value of every coefficient of $p$ 
is at most $m^{O(d)}=\poly(M)$, the weight of the corresponding neuron $g$ is at most $m^{O(d)}=\poly(M)$.
Therefore, there exists some fast convergent activation $L:\R^M\rightarrow\R$
such that $g(\bx)=L(\bx')=L(V_{O(d)}(\bx))$ for all $\bx$.
We now bound the error for $L$ under the distribution $(\bX',Y')$.
We note that conditional on $Y=f_-$, we will always have 
that $\bv_S^T\bx\notin J$ and conditional on $Y=f(c_+)$, 
we will have that $\bv_S^T\bx\notin J$ with probability at most $\zeta$. 
Therefore, by the law of total expectation, we have that
\begin{align*}
&\quad\E_{(\bX',Y')}[(Y'-L(\bX))^2]=\E_{(\bX,Y)}[(Y-g(\bX))^2]
\\
&\le\E_{(\bX,Y)}[(Y-g(\bX))^2\mid Y=f_-]+\E_{(\bX,Y)}[(Y-g(\bX))^2\mid Y=f(c_+)]\\
&\le\E_{(\bX,Y)}[(f_--g(\bX))^2\mid Y=f_-]+2\zeta\E_{(\bX,Y)}[(f_--f(c_+))^2+(f_--g(\bX))^2\mid \bv_S^T\bX\notin J, Y=f(c_+)]\\
&\le1/\poly(M)+2\zeta\cdot(1/\poly(M)+(f_--f(c_+))^2)\\&\le\exp(-\Omega(\log(M)^{8/9}))+\exp(-\Omega(\log(M)^{8/9}))\cdot(1/\poly(M)+(f_--f(c_+))^2)\\&\le\exp(-\Omega(\log(M)^{8/9})) \;,
\end{align*}
where the third inequality follows from the definition of fast convergent activation. 
Therefore, we have that $\opt_\daniel\le\exp(-\Omega(\log(M)^{8/9}))$.
We then show that $(\bX',Y')$ is a $L_2$-Massart 
single neuron distribution with activation $f$ and 
with noise rate upper bound of $\eta=\frac{2(f(c_+)-f_-)^2}{9}$.
Note that for any $\bx\in\R^{m'}$, if $\bv_S^T\bx\in J$, then 
$g(\bx)=f(p(\bv_S^T\bx))=f(c_+)$ and $Y$ will always be $f(c_+)$, 
which implies that the error will always be $0$.
Hence, we assume that $\bv_S^T\bx\notin J$ and have that
\begin{align*}
\frac{\Pr_{(\bX,Y)}[Y=f_-\mid \bX=\bx]}{\Pr_{(\bX,Y)}[Y=f(c_+)\mid \bX=\bx]}
&=\frac{\Pr_{(\bX,Y)}[Y=f_-]\cdot\Pr_{(\bX,Y)}[\bX=\bx\mid Y=f_-]}{\Pr_{(\bX,Y)}[Y=f(c_+)]\cdot\Pr_{(\bX,Y)}[\bX=\bx\mid Y=f(c_+)]}\\
&=\frac{\|\D_+\|_1\cdot\p_S^{\D_+}(\bx)}{\|\D_-\|_1\cdot\p_S^{\D_-}(\bx)}=\frac{\D_+(\bv_S^T\bx)}{\D_-(\bv_S^T\bx)}\ge2 \;,
\end{align*}
which implies that $\Pr_{(\bX,Y)}[Y=f(c_+)\mid \bX=\bx]\le1/3$. Therefore,
\begin{align*}
&\quad\E_{(\bX',Y')}[(Y'-L(\bX'))^2\mid\bX'=\bx']
=\E_{(\bX,Y)}[(Y-g(\bX))^2\mid\bX=\bx]\\&=(f(c_+)-g(\bx))^2\Pr_{(\bX,Y)}[Y=f(c_+)\mid\bX=\bx]+(f_--g(\bx))^2\Pr_{(\bX,Y)}[Y=f_-\mid\bX=\bx]\\
&\le\frac{(f(c_+)-g(\bx))^2}{3}+(f_--g(\bx))^2\le\frac{2\left((f(c_+)-f_-)^2+(f_--g(\bx))^2\right)}{3}+(f_--g(\bx))^2\\&\le\frac{2(f(c_+)-f_-)^2}{3}+1/\poly(M)\le\frac{8(f(c_+)-f_-)^2}{9} \;,
\end{align*}
where the third inequality follows from $\bv_S^T\bx\notin J$ 
and the definition of fast convergent activation.
This completes the proof of the claim.
\end{proof}

We now show that the $(\D_+,\D_-,f_-,f(c_+),m')$-Hidden Junta Testing Problem 
efficiently reduces to our learning task.
In more detail, we show that any SQ algorithm that computes a hypothesis $h'$ satisfying
$\E_{(\bX',Y')}[(h'(\bX')-Y')^2] < p(1-p)(f_--f(c_+))^2-2\sqrt{2\tau}$ can be used as a black-box
to distinguish between $\p^{\D_+,\D_-,p}_{S,a,b}$, 
for some unknown subset $S\subseteq[m']$ with $|S|=m$, and $U^p_{m'}$.
Since there is a 1-1 mapping between $\bx\in\{0,1\}^{m'}$ and $\bx'\in\{0,1\}^M$,
we denote $h:\{0,1\}^{m'}\mapsto\R$ to be $h(\bx)=h'(\bx')$.
We note that we can
(with one additional {query} to estimate the $\E[(h'(\bX')-Y')^2]$ within error $\sqrt{2\tau}$)
distinguish between (i) the distribution $\p^{\D_+,\D_-,p}_{S,a,b}$,
and (ii) the distribution  $U^p_{m'}$.
This is because for any $h$ we have that
\begin{align*}
\E_{(\bX,Y)\sim U^p_{m'}}[(h(\bX)-Y)^2]
&=\E_{(\bX,Y)\sim U^p_{m'}}[h(\bX)^2]-2\E_{(\bX,Y)\sim U^p_{m'}}[h(\bX)]\E_{(\bX,Y)\sim U^p_{m'}}[Y]\\
&\quad+\E_{(\bX,Y)\sim U^p_{m'}}[Y^2]\\
&\ge\E_{(\bX,Y)\sim U^p_{m'}}[h(\bX)]^2-2\E_{(\bX,Y)\sim U^p_{m'}}[h(\bX)]\E_{(\bX,Y)\sim U^p_{m'}}[Y]\\&\quad+\E_{(\bX,Y)\sim U^p_{m'}}[Y^2]\\
&\ge\E_{(\bX,Y)\sim U^p_{m'}}[Y^2]-\E_{(\bX,Y)\sim U^p_{m'}}[Y]^2=p(1-p)(f_--f(c_+))^2 \;.
\end{align*}
Applying~\cref{prop:gen-sq-prop}, we determine that any 
SQ algorithm which, given access to a distribution $\p$
so that either $\p=U_{m'}^p$, or $\p$ is given by $\p_{S,a,b}^{\D_+,\D_-,p}$
for some unknown subset $S\subseteq[m']$ with $|S|=m$,
correctly distinguishes between these two cases 
with probability at least $2/3$ must either 
make queries of accuracy better than $\sqrt{2\tau}$
or must make at least
$2^{\Omega(m)}\tau/(\chi^2(A, \mathrm{Bin}(m,1/2))+\chi^2(B, \mathrm{Bin}(m,1/2)))$
statistical queries. Therefore, it is impossible for an SQ algorithm
to learn a hypothesis with error better than 
$p(1-p)(f_--f(c_+))^2-2\sqrt{2\tau}=\Theta(1/s)-\Theta(\sqrt{\tau})=1/\mathrm{polylog}(M)$
without either using queries of accuracy better than $\tau$
or making at least $2^{\Omega(m)}\tau/\mathrm{polylog}(M) > 1/\tau$ many queries.
This completes the proof of~\cref{thm:sq-fast-convergent-daniel}.
\end{proof}

\section{Conclusion and Future Directions} \label{sec:conclusion}
In this work, we showed that no efficient SQ algorithm can approximate 
the optimal error within any constant factor for learning single neurons with Massart noise.
In the process, we constructed new moment-matching distributions corresponding 
to Boolean halfspaces with Massart noise, which is a result of independent interest.
Importantly, our construction has some additional desirable properties 
which allows us to establish hardness for learning low-weight LTFs,
strengthening the result of~\cite{DK21-SQ-Massart}.
In addition, we provide a simple technique for transforming 
our binary construction into hardness of learning real-valued single neurons with Massart noise.

A number of avenues for future work remain, some of which we briefly discuss below.
Recent work~\cite{DKKTZ22} studied the problem 
of learning halfspaces under the Gaussian distribution with 
Massart noise for $\eta=1/2$.
It is plausible that the $\eta=1/2$ case in our distribution-independent setting 
is much harder than the $\eta=0.49$ case. Establishing such a statement is left
as an interesting open question. Another direction concerns the distribution-specific setting.
Are there efficient algorithms with non-trivial error guarantees 
(e.g., achieving a constant factor approximation) 
for learning single neurons under simple discrete distributions 
(e.g., under the uniform distribution on the hypercube)?

\bibliographystyle{alpha}

\bibliography{allrefs}

\end{document}